\def\eqref#1{equation~\ref{#1}}
\def\1{\bm{1}}
\def\mB{{\bm{B}}}
\def\mH{{\bm{H}}}
\def\mL{{\bm{L}}}
\def\mP{{\bm{P}}}
\def\mZ{{\bm{Z}}}
\DeclareMathAlphabet{\mathsfit}{\encodingdefault}{\sfdefault}{m}{sl}
\SetMathAlphabet{\mathsfit}{bold}{\encodingdefault}{\sfdefault}{bx}{n}
\newcommand*{\eg}{e.g.\@\xspace}
\newcommand*{\ie}{i.e.\@\xspace}
\newcommand*{\etc}{%
    \@ifnextchar{.}%
        {etc}%
        {etc.\@\xspace}%
}
\def\ie{\mbox{\textit{i.e.}, }}
\def\eg{\mbox{\textit{e.g.}, }}
\def\mB{{\mathcal B}}
\def\mH{{\mathcal H}}
\def\mL{{\mathcal L}}
\def\mP{{\mathcal P}}
\def\mZ{{\mathcal{Z}}}
\DeclareMathAlphabet\mathbfcal{OMS}{cmsy}{b}{n}
\def\0{{\bf 0}}
\def\1{{\bf 1}}
\def\bC{{\bf C}}
\def\bS{{\bf S}}
\def\bT{{\bf T}}
\def\bY{{\bf Y}}
\def\bZ{{\bf{Z}}}
\def\bs{{\bf s}}
\def\bw{{\bf w}}
\def\bx{{\bf x}}
\def\bz{{\bf z}}
\def\mmR{{\mathbb R}}
\def\trsp{{\sf T}}
\def\bx{{\bf x}}
\def\bY{{\bf Y}}
\def\bw{{\bf w}}
\def\bz{{\bf z}}
\newtheorem{deftn}{Definition}
\newtheorem{thm}{Theorem}
\newtheorem*{*thm}{Theorem}
\newtheorem{lemma}{Lemma}
\newtheorem*{*lemma}{Lemma}
\newenvironment*{proof}{\textbf{Proof}\quad}{\hfill $\square$\par\vspace{2pt}}
\def\mm1{{\mathds{1}}}
\def\shihao{\textcolor{black}}
\def\rebuttal{\textcolor{black}}
\title{Improving deep regression with tightness}
\author{Shihao Zhang$^1$, Yuguang Yan$^2$, Angela Yao$^1$ \\
$^1$National University of Singapore ~~
$^2$Guangdong University of Technology\\
\texttt{zhang.shihao@u.nus.edu ~ 
ygyan@gdut.edu.cn~ 
ayao@comp.nus.edu.sg} \\
}
\begin{document}

\maketitle

\begin{abstract}

For deep regression, preserving the ordinality of the targets with respect to the feature representation improves performance across various tasks. However, a theoretical explanation for the benefits of ordinality is still lacking. This work reveals that preserving ordinality reduces the conditional entropy $\mH(\bZ|\bY)$ of representation $\bZ$ conditional on the target $\bY$. However, our findings reveal that typical regression losses do little to reduce $\mH(\bZ|\bY)$, even though it is vital for generalization performance.  
With this motivation, we introduce an optimal transport-based regularizer to preserve the similarity relationships of targets in the feature space to reduce $\mH(\bZ|\bY)$.
Additionally, we introduce a simple yet efficient strategy of duplicating the regressor targets, also with the aim of reducing $\mH(\bZ|\bY)$. 
Experiments on three real-world regression tasks verify the effectiveness of our strategies to improve deep regression. Code: \url{https://github.com/needylove/Regression_tightness}.

\end{abstract}

\section{Introduction}
Classification and regression are two fundamental tasks in machine learning.  Classification maps input data to categorical targets, while regression maps the data to continuous target space. 
Representation learning for classification in deep neural networks is well-studied~\citep{boudiaf2020unifying, achille2018information}, but is less explored for regression. One emerging observation in deep regression is the importance of feature ordinality~\citep{zhang2023improving}. 
Preserving the ordinality of targets within the feature space leads to better performance, and various regularizers have been proposed
\citep{gong2022ranksim, keramati2023conr} to enhance ordinality.
But what is the link between ordinality and regression performance? 

The information bottleneck principle~\citep{shwartz2017opening} suggests that a neural network learns representations $\bZ$ that retain sufficient information about the target $\bY$ while compressing irrelevant information. The two aims can be regarded as minimizing the conditional entropies $\mH(\bY|\bZ)$ and $\mH(\bZ|\bY)$, respectively~\citep{zhang2024deep}.  Compression reduces representation complexity, prevents overfitting, and bounds the generalization error~\citep{tishby2015deep, kawaguchi2023does, zhang2024deep}. We find that preserving ordinality enhances compression by minimizing $\mH(\bZ|\bY)$, \ie the conditional entropy of the learned representation $\bZ$ with respect to the target $\bY$. 
Following~\citep{boudiaf2020unifying,zhang2024deep}, we refer to this conditional entropy as \emph{tightness}, and its compression as as \emph{tightening the representation}.

But are ordinal feature spaces not learned naturally by the regressor?  
We explore this question through gradient analysis and comparing the differences between regression and classification. 
We find that typical regressors 
are weak in tightening the learned representations.
Specifically, given a fixed linear regressor with weight vector $\bm{\theta}$, the update direction of $\bz_i$ for a given sample $i$ tends to follow the direction of $\bm{\theta}$. The movement of $\bz_i$ can be regarded as a probability density shift \citep{sonoda2019transport}. Deep regressors 
update and tighten the representation in limited directions perpendicular to $\bm{\theta}$. 
In contrast, we find that deep classifiers update $\bz_i$ more flexibly and in diverse directions, 
leading to better-tightened representations. Such a finding sheds insight into why reformulating regression as a classification task may be more effective
\citep{farebrother2024stop, liu2019counting}
and why classification losses benefit 
representation learning for regression \citep{zhang2023improving}. 

So how can regression representations be further tightened? 
We take inspiration from 
classification, where one-hot encodings allow a separate set of classification weights $\bm{\theta}_k$ for each class $k$.  Similarly, we augment the target space of the regressor into multiple targets.  The multiple-target strategy adds extra dimensions to the regression output and incorporates additional regressors, making it more flexible to tighten the feature representations.  
Additionally, we introduce a Regression Optimal Transport (ROT) Regularizer, or ROT-Reg. ROT-Reg captures local similarity relationships through optimal transport plans.  By encouraging similar plans between the target and feature space, the regularizer can tighten representations locally.  It also helps to preserve the target space topology, 
which is also desirable for regression representations \citep{zhang2024deep}.

Our main contributions are three-fold:
\begin{itemize}
    \item  We are the first to analyze the need for preserving target ordinality with respect to the representation space for deep regression and link it to feature tightness.
    \item We reveal the weakness of standard regression in tightening learned feature representations, as the representation updating direction is constrained to follow a single line.
    \item We introduce a multi-target learning strategy and an optimal transport-based regularize, which tighten regression representations globally and locally, respectively.
\end{itemize}

\section{Related work}

\textbf{Regression representation learning}. 
Existing works mainly focus on the properties of continuity and ordinality. For continuity, DIR \citep{yang2021delving} tackles missing data by smoothing based on the continuity of both targets and representations.  VIR \citep{wang2024variational} computes the representations with additional information from data with similar targets. 
Preserving the representation's continuity  can also encourage the feature manifold to be homeomorphic with respect to the target space and is highly desirable~\citep{zhang2024deep}. 

For ordinality, RankSim \citep{gong2022ranksim} explicitly preserves the ordinality for better performance. Conr \citep{keramati2023conr} further introduces a contrastive regularizer to preserve the ordinality. It is worth mentioning that the continuity sometimes overlaps with the ordinality, and obtaining neighbor samples in continuity also requires ordinality. Although ordinality plays a key role in regression representation learning, it importance and 
characteristics are underexplored.  
This work tackles these questions by establishing connections between target ordinality and representation tightness.

\textbf{Recasting regression as a classification}. For a diverse set of regression tasks, formulating them into a classification problem yields better performance \citep{li2022simcc, bhat2021adabins, farebrother2024stop}. Previous works have hinted at task-specific reasons. For pose estimation, classification provides denser and more effective supervision \citep{gu2022dive}. For crowd counting, classification is more robust to noise \citep{xiong2022discrete}. Later, \citet{pintea2023step} empirically found that classification helps when the data is imbalanced, and \citet{zhang2023improving} suggests regression lags in its ability to learn a high entropy feature space. A high entropy feature space implies the representations preserve necessary information about the target. In this work, we provide a derivation and further suggest regression has insufficient ability to compress the representations. 

\section{On the tightness of regression representations}
\label{section:analysis}
\subsection{Notations \& Definitions}

Consider a dataset $\{\bx_i, \bz_i, y_i\}_{i=1}^N$ sampled from a distribution $\mP$, where $\bx_i$ is the input, 
$y \in \mmR$ is the corresponding label, and $\bz_i \in \mZ \subset \mmR^{d}$ is the feature corresponding to the input $\bx_i$ extracted by a neural network. 
A regressor $f_{\bm{\theta}}$ parameterized by $\bm{\theta}$ maps $\bz_i$ to a predicted target $\hat{y}_i = f_{\bm{\theta}}(\bz_i)$. Specifically, when $f_{\bm{\theta}}$ is a linear regressor, which is typically the case in deep neural networks, we have $\hat{y}_i = \bm{\theta}^{\trsp}\bz_i$. The encoder and $f_{\bm{\theta}}$ are trained by minimizing a task-specific regression loss $\mL_{reg}$. Typically, the mean-squared error is used, i.e. $\mL_{reg} = \frac{1}{N}\sum_{i=1}^N(y_i - \hat{y}_i)^2$. 

To formulate regression as a classification problem, the continuous target $y$ is quantized to $K$ classes $y_i^c \in \{1, \cdots, K\}$,
and the cross-entropy loss is used to train the encoder and classifiers $\mL_{CE} = -\frac{1}{N} \sum_{i=1}^{N}\log \frac{\exp \bm{\theta}_{y^c_i}^{\trsp}\bz_i}{\sum_{j=1}^{K} \exp \bm{\theta}_j^{\trsp}\bz_i}$, where $\bm{\theta}_k$ is the classifier \footnote{In this work, a classifier represents a single $\bm{\theta}_j$ rather than the whole set $\{\bm{\theta}_j| j\in K\}$.} corresponding to the class $k$. The function $d(*,*)$ measures some distance between two points, \eg Euclidean distance. 
 
\subsection{Ordinality and Tightness}
This section shows that preserving ordinality tightens the learned representation, and conversely, tightening the representation will help preserve ordinality.  
A lower $\mH(\bZ|\bY)$ represents a higher compression
\citep{zhang2024deep}
. The compression is maximized when $\mH(\bZ|\bY)$ is in its minimal ($\mH(\bZ|\bY)=-\infty$ for \rebuttal{differential} entropy and $\mH(\bZ|\bY)=0$ for the discrete entropy).  

First, we define ordinality following~\citep{gong2022ranksim}:
\begin{deftn}
(Ordinality). The ordinality is perfectly preserved if 
$\forall i,j,k$, the following holds:  
$d(y_i, y_j) \leq d(y_i, y_k) \Rightarrow  d(\bz_i, \bz_j) \leq d(\bz_i, \bz_k)$. 
\end{deftn}

\begin{thm}
\label{thm:ranktoTightness}
    Let $\mB(\bz,\epsilon) = \{\bz' \in \mZ| d(\bz, \bz') \leq \epsilon \}$ be the closed ball center at $\bz$ with radius $\epsilon$. 
    Assume that $\forall (\bx, \bz, y) \in \mP$ and $ \forall \epsilon >0, \exists (\bx', \bz', y') \in \mP$ such that $\bz'\in \mB(\bz,\epsilon)$ and $y' \neq y$. 
    Then if the ordinality is perfectly preserved, $\forall (\bx_i, \bz_i, y_i), (\bx_j, \bz_j, y_j) \in \mP$, the following hold: $y_i = y_j \Rightarrow d(\bz_i,\bz_j) = 0$.
\end{thm}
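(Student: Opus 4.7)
The plan is to argue by contradiction, exploiting the density assumption to produce a point that violates the contrapositive of the ordinality condition. Suppose, toward a contradiction, that there exist $(\bx_i, \bz_i, y_i)$ and $(\bx_j, \bz_j, y_j)$ in $\mP$ with $y_i = y_j$ but $d(\bz_i, \bz_j) = \delta > 0$. The goal is to show that the density assumption forces the existence of a witness point whose feature is closer to $\bz_i$ than $\bz_j$ is, even though its target is strictly farther from $y_i$ than $y_j$ is, contradicting the ordinality hypothesis.

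The concrete steps I would carry out are: first, apply the density assumption to the point $(\bx_i, \bz_i, y_i)$ with $\epsilon = \delta/2$, yielding a sample $(\bx', \bz', y') \in \mP$ such that $\bz' \in \mB(\bz_i, \delta/2)$ and $y' \neq y_i$. Second, read off the two strict comparisons: $d(\bz_i, \bz') \leq \delta/2 < \delta = d(\bz_i, \bz_j)$ in feature space, while in target space $d(y_i, y_j) = 0 < d(y_i, y')$ (using that $y_i = y_j$ and $y' \neq y_i$, together with the fact that $d$ is a distance and so vanishes exactly on equal pairs). Third, invoke the contrapositive of the ordinality definition, namely
\[
d(\bz_i, \bz_j) > d(\bz_i, \bz_k) \;\Longrightarrow\; d(y_i, y_j) > d(y_i, y_k),
\]
applied with $k$ being the index of $(\bx', \bz', y')$. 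The feature-side inequality from step two triggers the hypothesis of this implication, so ordinality forces $d(y_i, y_j) > d(y_i, y')$; but step two also established the reverse strict inequality on the target side, which is the desired contradiction.

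I do not anticipate a real obstacle: the density assumption is tailored precisely to produce the witness needed, and everything reduces to carefully tracking a single application of the contrapositive of ordinality. The only thing worth being careful about is making sure the argument uses the contrapositive in the correct direction (from a feature-space strict inequality to a target-space strict inequality), and that the density assumption allows us to pick $\epsilon$ as small as $\delta/2$ (which it does, since it is quantified over all $\epsilon > 0$). Once these points are in place, the proof is a short three-line deduction.
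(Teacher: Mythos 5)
Your proof is correct, but it follows a genuinely different route from the paper's. You argue by contradiction and invoke the \emph{contrapositive} of ordinality anchored at $\bz_i$: setting $\delta = d(\bz_i,\bz_j) > 0$, the density assumption at scale $\delta/2$ produces a witness $(\bx',\bz',y') \in \mP$ with $d(\bz_i,\bz') \le \delta/2 < d(\bz_i,\bz_j)$, whence ordinality forces $0 = d(y_i,y_j) > d(y_i,y') \ge 0$, a contradiction. The paper instead gives a direct limiting argument with ordinality applied in its original (forward) direction and anchored at the \emph{witness} $\bz_k$: since $y_i = y_j$, the target-side comparison $d(y_k,y_j) \le d(y_k,y_i)$ holds trivially, hence $d(\bz_k,\bz_j) \le d(\bz_k,\bz_i)$, and the triangle inequality gives $d(\bz_i,\bz_j) \le d(\bz_i,\bz_k) + d(\bz_k,\bz_j) \le 2d(\bz_i,\bz_k) \le 2\epsilon$ for every $\epsilon>0$, so letting $\epsilon \to 0$ finishes. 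The trade-offs: your argument is a one-shot deduction that needs neither the triangle inequality nor a limit, but it leans on the metric axioms of the target space (that $d(y_i,y_j)=0$ and distances are nonnegative --- note nonnegativity alone already yields your contradiction, so the clause $y' \neq y_i$ is not strictly needed for strict positivity); the paper's direct version instead yields a quantitative bound, $d(\bz_i,\bz_j) \le 2\epsilon$ whenever some sample lies within $\epsilon$ of $\bz_i$, which would survive relaxation to approximately preserved ordinality or finitely dense samples, whereas a proof by contradiction produces only the limiting, all-or-nothing claim.
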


The detailed proof of Theorem \ref{thm:ranktoTightness} is given in Appendix \ref{appendix:proof_1}. Theorem \ref{thm:ranktoTightness} states that if the ordinality is perfectly preserved, then the tightness (i.e. $\mH(\bZ|\bY)$) is minimized.  This suggests that preserving ordinality will tighten the representations. The assumption in Theorem \ref{thm:ranktoTightness} aligns with the learning target that learning continuously changes representations from continuous targets.

Conversely, if the representations can be correctly mapped to the target and are perfectly tightened, then the representations collapse into a manifold homeomorphic to the target space (e.g., collapse into a single line when the target space is a line) [\citep{zhang2024deep}, Proposition 2]. Thus, ordinality will be perfectly preserved locally. Note that reserving ordinality globally constrains the line to be straight, which is not necessary. 

\subsection{Regression tightness}
Why are additional efforts to emphasize ordinality necessary?
In this work, we find that standard deep regressors are weak in their ability to tighten the representations due to the gradient update direction with respect to the representations. Consider a fixed linear regression with a typical regression loss (e.g., MSE, L1),
which has the following gradient with respect to $\bz_i$:
\begin{align} 
    \frac{\partial \mL_{\text{reg}}}{\partial \bz_i} &= \mL'_{\text{reg}}(\bm{\theta}^{\trsp} \bz_i-y_i)  \bm{\theta}^{\trsp}.
\end{align}
Here, the direction of $\frac{\partial \mL_{\text{reg}}}{\partial \bz_i}$ is determined solely by the direction of $\bm{\theta}$.  As such, during learning, all the 
$\bz_i$ are moved either towards the direction of $\bm{\theta}$ (or away). This movement can be regarded as a probability density shift~\citep{sonoda2019transport}
, so regression suffers from a weak ability to change the probability density in directions perpendicular to $\bm{\theta}$, which indicates a limited ability to tighten the representations in those directions.
In other words, regressors can only move $\bz_i$ to $S_{y_i}$, but cannot tighten $S_{y_i}$, where $S_{y_i} = \{\bz|f_{\bm{\theta}}(\bz) = y_i\}$ is the solution space of $f_{\bm{\theta}}(\bz) = y_i$.
More generally, for a differentiable regressor, we have the following:

\begin{thm}
\label{thm：gradient}
     Assume $f_{\bm{\theta}}$ is differentiable and $S_{y'_i}$ is a convex set, then $\forall \bz'_i, \bz'_j \in S_{y'_i}$:
    \begin{align}
        \frac{\partial \mL_{\text{reg}}}{\partial \bz_i} (\bz'_i - \bz'_j) =0,
    \end{align}
    where $y'_i$ is the predicted target of $\bz_i$.
\end{thm}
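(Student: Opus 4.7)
The plan is to reduce everything to a statement about the normal direction of the level set $S_{y'_i}$, exploiting only the chain rule and the convexity assumption. First, I would note that any standard regression loss can be written as $\mL_{\text{reg}} = \ell\bigl(f_{\bm{\theta}}(\bz_i), y_i\bigr)$ for a differentiable scalar loss $\ell$ (e.g., $\ell(\hat y, y) = (\hat y - y)^2$). The chain rule then gives
\begin{equation*}
\frac{\partial \mL_{\text{reg}}}{\partial \bz_i} \;=\; \partial_1 \ell\bigl(f_{\bm{\theta}}(\bz_i), y_i\bigr)\,\nabla_{\bz_i} f_{\bm{\theta}}(\bz_i),
\end{equation*}
so the gradient with respect to $\bz_i$ is a scalar multiple of $\nabla f_{\bm{\theta}}(\bz_i)$, and it suffices to show $\nabla f_{\bm{\theta}}(\bz_i)\cdot(\bz'_i-\bz'_j)=0$.

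Next I would observe that by the definition of $y'_i$ we have $\bz_i\in S_{y'_i}$, so $\bz_i, \bz'_i, \bz'_j$ all lie in the convex set $S_{y'_i}$. Convexity implies that for any $\bz'\in S_{y'_i}$ the entire segment $(1-t)\bz_i + t\bz'$, $t\in[0,1]$, stays inside $S_{y'_i}$, hence $f_{\bm{\theta}}$ takes the constant value $y'_i$ along it. Differentiating $f_{\bm{\theta}}\bigl((1-t)\bz_i+t\bz'\bigr)=y'_i$ with respect to $t$ at $t=0$ and invoking differentiability of $f_{\bm{\theta}}$ yields
\begin{equation*}
\nabla f_{\bm{\theta}}(\bz_i)\cdot(\bz'-\bz_i)=0\quad\text{for every }\bz'\in S_{y'_i}.
\end{equation*}

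Applying this identity with $\bz'=\bz'_i$ and with $\bz'=\bz'_j$ and subtracting gives $\nabla f_{\bm{\theta}}(\bz_i)\cdot(\bz'_i-\bz'_j)=0$. Multiplying by the scalar $\partial_1 \ell(f_{\bm{\theta}}(\bz_i), y_i)$ delivers the claim. I do not foresee a real obstacle; the only subtlety is making clear that $\bz_i$ itself belongs to $S_{y'_i}$ (which is what lets us anchor one end of the segment there) and that the directional derivative argument requires $\bz_i$ to be in the convex set together with $\bz'$, not merely $\bz'_i$ and $\bz'_j$ together. If one wanted to avoid mentioning $\bz_i$ explicitly, the cleaner route is to note that any convex set on which a differentiable function is constant must lie in an affine slice orthogonal to $\nabla f_{\bm{\theta}}$ at each of its points, so every chord $\bz'_i-\bz'_j$ is orthogonal to $\nabla f_{\bm{\theta}}(\bz_i)$; this is essentially the same two-line argument and can be used as an alternative presentation.
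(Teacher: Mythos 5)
Your proposal is correct and follows essentially the same route as the paper: the paper factors the gradient via the chain rule into $\mL'_{\text{reg}}(g(\bz_i)-y_i)\,\nabla g(\bz_i)$ and proves a lemma showing $\nabla g(\bz_k)(\bz_i'-\bz_j')=0$ for points of the convex level set, using a first-order Taylor expansion along the segment $(1-\epsilon)\bz_k+\epsilon\bz_i'$ — exactly your directional-derivative argument along a chord, followed by the same subtraction step. The subtlety you flag (that $\bz_i$ itself lies in $S_{y'_i}$ because $y'_i$ is its predicted target, so the gradient's evaluation point can serve as the anchor of the segments) is indeed the point implicitly used when the paper applies its lemma with $\bz_k=\bz_i$.
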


The detailed proof of Theorem \ref{thm：gradient} is given in Appendix \ref{appendix:theorem2}. The regressor $f_{\bm{\theta}}$ is generally differentiable for gradient backpropagation, and $S_{y_i}$ is commonly a convex set with widely used regressors, such as the linear regressor. Theorem \ref{thm：gradient} shows that the gradient with respect to the representation will be perpendicular to its solution space and has no effect within the solution space. In other words, with a fixed regressor, the gradient only moves the representations to the corresponding solution space and lags in its ability to tighten the feature space. 

In reality, the regressor is not fixed (i.e., updating with training), and the solution space is also changing during training. 
In the case of a linear regressor, the gradient with respect to $\bm{\theta}$ over a batch of $b$ samples can be given as:
\begin{align}
\label{equation:theta}
    \frac{\partial\mL_{\text{reg}}}{\partial\bm{\theta}} &= \frac{1}{b}\sum_{i=1}^b \mL'_{\text{reg}}(\bm{\theta}^{\trsp} \bz_i-y_i) \bz_i^{\trsp} = \frac{1}{b}\sum_{i=1}^b \bw_i\bz_i^{\trsp}.
\end{align}
Here, the direction of $\frac{\partial\mL_{\text{reg}}}{\partial\bm{\theta}}$ will tend to be the weighted mean of the direction of $\bz_i$. As discussed, the direction of $\bz_i$ approaches the direction of $\bm{\theta}$. Thus, $\bz_i$ will distribute around $\bm{\theta}$ and offset each other, resulting in a limited impact on the direction of $\bm{\theta}$. 

It is worth mentioning that the tightness here is specific to $\mH(\bZ|\bY=y_i)$ within $S_{y_i}$, which is indirectly related to the predicted results and performance. By contrast, the tightness outside $S_{y_i}$ directly affects the predicted results and potentially plays a more important role.

\subsection{Comparison in Tightness for Classification}
\label{subsection:classification}

Comparing classification with regression, we find classification has a higher flexibility to tighten representations in diverse directions $\bm{\theta}_k$, which suggests an ability to better tighten the representation. For the gradient with respect to $\bz_i$ \rebuttal{over a batch of b samples}:
\begin{align} 
    \frac{\partial \mL_{CE}}{\partial \bz_i} = \frac{\partial(-\frac{1}{b} \sum_{i=1}^{b} \bm{\theta}_{y_i}^{\trsp}\bz_i)}{\partial \bz_i} + \frac{\partial(\frac{1}{b} \sum_{i=1}^{b} \log \sum_{j=1}^{K} e^{ \bm{\theta}_j^{\trsp}\bz_i})}{\partial \bz_i} = \frac{1}{b} \sum_{i=1}^{b} \Bigl(\sum_{j=1}^{K} p_{ij}\bm{\theta}_{j}^{\trsp} -\bm{\theta}_{y_i}^{\trsp}\Bigr)
\end{align}
where $p_{ij}=\frac{\exp \bm{\theta}_j^{\trsp}\bz_i}{\sum_{k=1}^{K} \exp \bm{\theta}_k^{\trsp}\bz_i}$ is the probability of sample $i$ belonging to class $j$.
Here, the direction of $\frac{\partial \mL_{CE}}{\partial \bz_i}$ is affected by all $\bm{\theta}_k$, and $\bz_i$ will approach $\bm{\theta}_{y_i}$ with training. In contrast, the direction of $\frac{\partial \mL_{Reg}}{\partial \bz_i}$ is purely determined by $\bm{\theta}$.
Classification moves $\bz_i$ to its corresponding classifier $\bm{\theta}_{y_i}$ even if the sample is correctly classified. At the same time, regression does not have any effect on $\bz_i$ if it is correctly predicted (i.e., $\frac{\partial \mL_{\text{reg}}}{\partial \bz_i} = 0$). {This suggests classification has a higher ability to tighten the representations in the solution space $S_{y_i}$. Here $S_{y_i}$ for classification is defined as the set of $\bz_i$ that are classified as class $y_i$.}

In reality, the classifiers $\bm{\theta}_k$ are not fixed and are updated with training. The gradient with respect to $k^{\text{th}}$ classifier $\bm{\theta_k}$ \rebuttal{over a batch of b samples} is given as:
\begin{align}
    \frac{\partial \mL_{CE}}{\partial \bm{\theta}_k} = -\frac{1}{b} \sum_{i:y_i=k}\bz_i^{\trsp} + \frac{1}{b}\sum_{i=1}^{b}\frac{e^{ \bm{\theta}_k^{\trsp}\bz_i^{\trsp}}}{\sum_{j=1}^{K} e^{ \bm{\theta}_j^{\trsp}\bz_i}}\bz_i^{\trsp} 
    = \frac{1}{b}\sum_{i=1}^{b}(p_{ik} -\delta_{y_i,k})\bz_i^{\trsp} 
\end{align}
where $p_{ik}=\frac{e^{ \bm{\theta}_k^{\trsp}\bz_i}}{\sum_{j=1}^{K} e^{ \bm{\theta}_j^{\trsp}\bz_i}}$ is the probability of sample $i$ belongs to the class $k$, and $\delta_{y_i,k}$ is the Kronecker delta function. For classification, the direction of $\bm{\theta}_k$ 
will biased to the $\bz_i$ with respect to the class $k$
, while $\bz_i$ will also bias to its corresponding classifier. In contrast, for regression, the direction of $\bm{\theta}$ will tend to be the weighted mean of the direction of $\bz_i$.  Thus, the effect of the many $\bz_i$ on the direction of $\bm{\theta}$ will offset each other and have a limited impact. As a result, changes in the directions of $\bm{\theta}_k$ are generally greater than the change of the $\bm{\theta}$ direction in regression, and therefore classification can move $\bz$ more flexible and thus can potentially better tighten the representation.

\section{Method}
Our analysis in Sec. \ref{section:analysis} inspires us to tighten the regression representations. To this aim, we introduce the Multiple Target (\textbf{MT}) strategy and the Regression Optimal Transport Regularizer (\textbf{ROT-Reg}) to tighten the representations globally \rebuttal{(i.e., $\min_{\bZ} \mH(\bZ|\bY)$)} and locally \rebuttal{(i.e., $\min_{\bZ} \mH(\bZ|\bY=y_i), \bZ \in \mB(\bz_i, \epsilon)$, $\epsilon$ control the degree of locality).}
Inspired by the effect of multiple classifiers in classification, the MT strategy introduces additional targets as constraints to compress the representations. 
For ROT-Reg, we exploit it to encourage representations to have local structures similar to the targets, which implicitly tightens the representations.

\subsection{Target space with extra dimensions better tighten the feature space}
Our analysis in Sec. \ref{subsection:classification} suggests that classification outperforms regression in its ability to compress the representations in multiple directions, which come from multiple classifiers. 
Inspired by this, we introduce a simple yet efficient strategy, which adds extra dimensions for the target space to bring in extra regressors as constraints. Here, the additional regressors have a similar effect as individual classifiers. As shown in Figure \ref{figure_extraDimensions}, the additional constraints will result in a lower-dimensional $S_{y_i}$, which indicates higher compression. The number of additional targets depends on the intrinsic dimension of the feature manifold. 
In our Multiple Targets strategy, the final predicted target is the average over the multiple predicted targets:
\begin{align}
\label{equation:MT}
    \hat{y_i} = \frac{1}{M}\sum_{t=1}^M \hat{y}_i^t,
\end{align}
where $M$ is the number of the total target dimension and $\hat{y}_i^t$ is the $t^{\textit{th}}$ predicted target.

\begin{figure}[!t]
\centering
\includegraphics[width=0.8\linewidth]{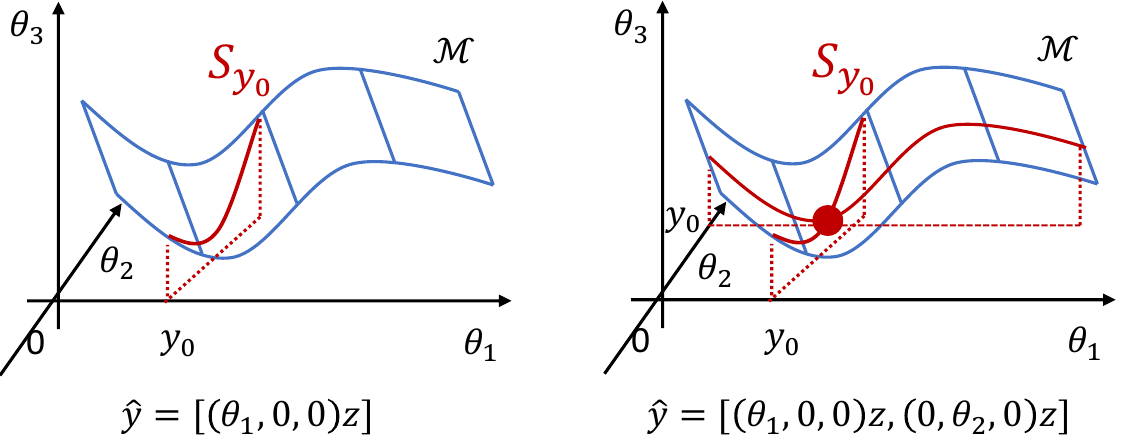}
\caption{Illustration of the MT strategy. Changing the target from $y$ to $[y, y]$ will introduce an additional regressor to predict the additional $y$. The original solution space $S_{y_0}$ is a line in the feature manifold. The additional $y$ introduces a new constraint, tightening $S_{y_0}$ from a line to a point.}
\label{figure_extraDimensions}
\end{figure}

\subsection{Regression Optimal Transport Regularizer (ROT-Reg)}
The MT strategy tightens the representations globally through additional regressors.
We propose to further tighten the representations locally.  Specifically, we preserve the local similarity relations between the target and representation space. 
The local similarities are characterized by a self entropic optimal transport model \citep{yan2024optimal,landa2021doubly}.  The model determines the optimal plan is to move a set of samples to the set itself with minimal transport costs,
while each sample cannot be moved to itself.

Formally,
Given a set $S = \{ \bs_1, \ldots, \bs_n \}$,
the corresponding weight vector $\mathbf{p} = \mathbb{R}^{n}$ reflects how many masses the samples have,
where the weights simplify the simplex constraint $\sum_{i=1}^{n} p_i = 1$.
Usually, one can easily implement $\mathbf{p}$ as a uniform distribution, i.e., $p_i = \frac{1}{n}, ~\forall~i \in [n]$.
$C_{ij}^{\bS}$ is the transport cost between $\bs_i$ and $\bs_j$,
which is usually adopted as the Euclidean distance between the samples,
and $T_{ij}^{\bS}$ indicates how many masses are transported from the locations of $\bs_i$ to $\bs_j$.
The self entropic optimal transport is defined as follows:
\begin{align}
    \mathcal{T}(\bS) =\arg\min_{\bT} &~ \langle \bC^{\bS}, \bT \rangle + \gamma \Omega(\bT) \\
    \mbox{s.t.} &~ \bT \1_{n} = \mathbf{p}, \bT^{\top} \1_{n} = \mathbf{p}, T_{ii}=0 ~~ \forall i \in [n],
\end{align}
where $\gamma$ is a trade-off parameter,
and $\Omega(\bT) = \sum_{i=1}^{n} \sum_{j=1}^{n} T_{ij}^{\bS} \log T_{ij}^{\bS}$ is the negative entropic regularization,
which is used to smoothen the solution and speed up the optimization \citep{NIPS2013_af21d0c9}.

Given the solution $\widetilde{\bT}^{\bS}$ minimizing the above objective,
the element $T_{ij}^{\bS}$ measures the similarity relation between samples $\bs_i$ and $\bs_j$,
since two samples with a large distance $C^{\bS}_{ij}$ will induce a small transport mass $T^{\bS}_{ij}$ between them.
As a result,
the optimal total transport cost $\langle \bC^s, \widetilde{\bT}^{\bS} \rangle$ reflects the tightness of the samples.

Motivated by this,
we employ the self optimal transport model to capture local similarity relations of target and representation spaces, respectively,
and encourage a relation consistency between two spaces.
In specific,
we first construct a self optimal transport model on the target space to obtain $\widetilde{\bT}^{\bY} = \mathcal{T}(\bY)$,
which describes the local similarity relations between the regression targets.
After that,
we learn regression representations $\bZ$ such that the corresponding optimal transport matrix 
$\widetilde{\bT}^{\bZ} = \mathcal{T}(\bZ)$ is consistent with $\widetilde{\bT}^{\bY}$,
which is achieved by the following loss function
\begin{align}
    \mL_{ot} = \big| \langle \bC^{\bZ}, \widetilde{\bT}^{\bY} \rangle - \langle \bC^{\bZ}, \widetilde{\bT}^{\bZ} \rangle \big|.
\end{align}
ROT-Reg is updating $\bC^{\bZ}$ through gradient backpropagation to minimize $\mL_{ot}$. In contrast, simply minimizing the gap of $\widetilde{\bT}^{\bY}$ and $\widetilde{\bT}^{\bZ}$ can introduce optimization challenges, as the two matrices are obtained iteratively through the Sinkhorn algorithm rather than simply through gradient backpropagation. In addition, directly minimizing $||\bC^\bZ - \bC^\bY ||_F$ imposes an overly strict constraint on the feature manifold, forcing it to become identical to the target space, which is unnecessary.

It is worth mentioning that $\gamma$ controls the `smooth' of the transport plan $\bT$ \rebuttal{, and determines the degree of locality}. When $\gamma=0$, $\bT$ will approach the minimal spanning tree (i.e., only transports mass to its nearest neighbor), and $\mL_{ot}$ will encourage the representations to have the same minimal spanning tree to the targets, which is shown to be a strategy to preserve the topology of the target space \citep{moor2020topological}. In fact, the topological auto-encoder \citep{moor2020topological} preserves the topological information in this way. Compared to topology autoencoder, $\mL_{ot}$ captures more local structures of targets when $\gamma > 0$. The final loss $\mL_f$ sums the task-specific loss $\mL_t$ and the regularizer with a trade-off hyper-parameter $\lambda$ :
\begin{align}
    \mL_f = \mL_t + \lambda \mL_{ot},
\end{align}

\section{Experiments}
We experiment on three deep regression tasks: age estimation, depth estimation, and coordinate prediction and compare with
RankSim \citep{gong2022ranksim},
Ordinal Entropy (OE) \citep{zhang2023improving},
and PH-Reg \citep{zhang2024deep}.
RankSim preserves ordinality explicitly to serve as an ordinality baseline. OE leverages classification for better regression representations and serves as a regression baseline. PH-Reg preserves the topological structure of the target space by the Topological autoencoder \citep{moor2020topological} and tightens the representation by Birdal's regularizer \citep{birdal2021intrinsic}, serving as a topology baseline. More details are given in Appendix \ref{appendix:details_real_world_tasks}.

\subsection{Real-world Datasets: Age Estimation and Depth Estimation}
For age estimation, we use AgeDB-DIR \citep{yang2021delving} and evaluate using Mean Absolute Error (MAE) as the evaluation metric. $\gamma$ and $\lambda$ are set to $0.1$ and $100$, respectively.
For depth estimation, we use NYUD2-DIR \citep{yang2021delving} and evaluate using the root mean squared error (RMSE) and the threshold accuracy $\delta_1$ as the evaluation metrics. $\gamma$ and $\lambda$ are set to $0.05$ and $10$, respectively. We set the total target dimension $M$ to be $8$ for both tasks.
Both AgeDB-DIR and NYUD2-DIR contain three disjoint subsets (i.e., Many, Med, and Few) divided from the whole set. We exploit the regression baseline models of \citep{yang2021delving}, which use ResNet-50 \citep{he2016deep} as the backbone, and follow their setting for both tasks.

Tables \ref{tab:age} and \ref{Table:depth} show results on age estimation and depth estimation respectively.  
Both the Multiple Targets strategy (\textbf{MT}) and $\mL_{ot}$ improve regression performance, and combining both further boosts the performance. Specifically, combining both achieves $0.52$ overall improvements (i.e. ALL) on age estimation, and a $0.156$ reduction of RMSE on depth estimation.

\begin{table}[t!]
	\caption{Quantitative comparison (MAE) on AgeDB-DIR. We report results as mean $\pm$ standard deviation over $10$ runs.
	\textbf{Bold} numbers indicate the best performance.
	}
 	\label{tab:age}
	\centering
		\scalebox{1}{\begin{tabular}{c|ccccc}
			\hline
			\multirow{1}[0]{*}{Method}
			& ALL & Many & Med. & Few  \\
			\hline
			Baseline & 7.80 $\pm$ 0.12 & 6.80 $\pm$ 0.06 & 9.11	$\pm$ 0.31 & 13.63	$\pm$ 0.43  \\
        $+$ RankSim  & 7.62	$\pm$ 0.13 &6.70	$\pm$ 0.10 & 8.90	$\pm$ 0.33 & 12.74	$\pm$ 0.48  \\
			 $+$ OE  & 7.65	$\pm$ 0.13 & 6.72	$\pm$ 0.09 & 8.77	$\pm$ 0.49 & 13.28	$\pm$ 0.73  \\
		+PH-Reg &  7.32 $\pm$ 0.09 & 6.50	$\pm$ 0.15 & 8.38	$\pm$ 0.11 & 12.18	$\pm$ 0.38  \\
  \hline
		+ MT &    7.67	$\pm$ 0.06 & 6.72	$\pm$ 0.08 & 8.87	$\pm$ 0.13 &  13.36	$\pm$ 0.16\\
  		+$\mL_{oe}$ & 7.36	$\pm$ 0.08 & 6.55	$\pm$ 0.07 &  8.40	$\pm$ 0.14 & 12.14	$\pm$ 0.33  \\
  	+ MT + $\mL_{oe}$ & \textbf{7.28}	$\pm$ \textbf{0.05} & \textbf{6.52}	$\pm$ \textbf{0.10} &  \textbf{8.26}	$\pm$ \textbf{0.19} & \textbf{11.86}	$\pm$ \textbf{0.24}  \\
   \hline
		\end{tabular}}
\end{table}

\begin{table}[t!]
	\caption{Quantitative comparison  on NYUD2-DIR. 
 }
 \label{Table:depth}
	\centering
		\scalebox{1}{\begin{tabular}{c|cccc|cccc}
			\hline
			\multirow{2}[0]{*}{Method} & \multicolumn{4}{c|}{RMSE~$\downarrow$} & \multicolumn{4}{c}{$\delta_1$~$\uparrow$}  \\
 			\cline{2-9}
			& ALL & Many & Med. & Few & ALL & Many & Med. & Few\\
			\hline
			Baseline  & 1.477 & 0.591 & 0.952 & 2.123 & 0.677 & 0.777 & 0.693 & 0.570 \\
   			+RankSim & 1.522 & \textbf{0.565}  & 0.889 & 2.213  & 0.666  & 0.791  & 0.735  & 0.513  \\
   			+OE & 1.419 & 0.671  &0.925 & 2.005  & 0.668  & 0.727  & 0.702  & 0.596  \\
   			+PH-Reg & 1.450 & 0.789  &0.911 & 2.002  & 0.620  & 0.621  & 0.680  & 0.596  \\
			\hline
   			+ MT & 1.367 & 0.605  &\textbf{0.854} & 1.952  & \textbf{0.715}  & \textbf{0.776}  & \textbf{0.759}  & 0.636  \\
   			+$\mL_{ot}$ & 1.353 & 0.654  &0.934 & 1.899  & 0.689  & 0.736  & 0.697  & 0.638  \\
   			+ MT + $\mL_{ot}$ & \textbf{1.321} & 0.685  & 0.951 & \textbf{1.829}  & 0.701  & 0.731  & 0.689  & \textbf{0.675}  \\
      \hline
		\end{tabular}}
\end{table}

\subsection{$\mL_{ot}$ preserves the local similarity relationships}
The effectiveness of $\mL_{ot}$ is verified  
with the coordinate prediction task from \cite{zhang2024deep}. 
This task predicts data coordinates sampled 
from manifolds such as Mammoth, Torus, and Circle, which have different topologies. 
The inputs are noisy data samples and the goal is to recover the true data coordinates. 
Figure \ref{Figure:coordinate_prediction} shows that $\mL_{ot}$ successfully preserves the similarity relationships of the targets, resulting in a feature manifold similar to the targets. 
Quantitative comparisons in Table \ref{tab:synthetic} indicate that $\mL_{oe}$ performs similarly to PH-Reg, specifically designed to preserve similarity relationships. However, the Multiple Targets (MT) strategy has a limited impact in this context, likely because the target space is three-dimensional, providing sufficient constraints for the feature manifold.

\begin{table}[t!]
	\caption{Results ($\mL_{\text{mse}}$) on the coordinate prediction dataset. We report results as mean $\pm$ standard deviation over $10$ runs. \textbf{Bold} numbers indicate the best performance.
 }
	\label{tab:synthetic}
	\centering
		\scalebox{1}{\begin{tabular}{c|ccc}
			\hline
			\multirow{1}[0]{*}{Method}
			& Mammoth & Torus & Circle  \\
			\hline
			Baseline  & 211 $\pm$ 55 & 3.01 $\pm$ 0.11 & 0.154 $\pm$ 0.006  \\
			$+$ InfDrop  & 367 $\pm$ 50 & 2.05 $\pm$ 0.04& 0.093 $\pm$ 0.003  \\
			$+$ OE  & 187 $\pm$ 88 & 2.83 $\pm$ 0.07 & 0.114 $\pm$ 0.007  \\
			 +Topological Autoencoder & 80 $\pm$ 61& 0.95 $\pm$ 0.05& 0.036 $\pm$ 0.004  \\
			 + PH-Reg & \textbf{49} $\pm$ \textbf{27} & \textbf{0.61} $\pm$ \textbf{0.05}& 0.013 $\pm$0.008  \\
			\hline
			 + MT & 174 $\pm$ 76& 2.99 $\pm$ 0.11&  0.152 $\pm$ 0.005    \\
			 + $\mL_{ot}$ & 87 $\pm$ 26& 0.77 $\pm$ 0.02&  0.010 $\pm$ 0.001    \\    
			 + MT+ $\mL_{ot}$& 76 $\pm$ 53 & 0.75 $\pm$ 0.03&  \textbf{0.010} $\pm$ \textbf{0.001}    \\    
    			\hline
		\end{tabular}}
\end{table}

\begin{figure}[!t]

	\centering
	\subfigure[Target Space]{	\includegraphics[width=0.235\linewidth,height=0.25\linewidth]{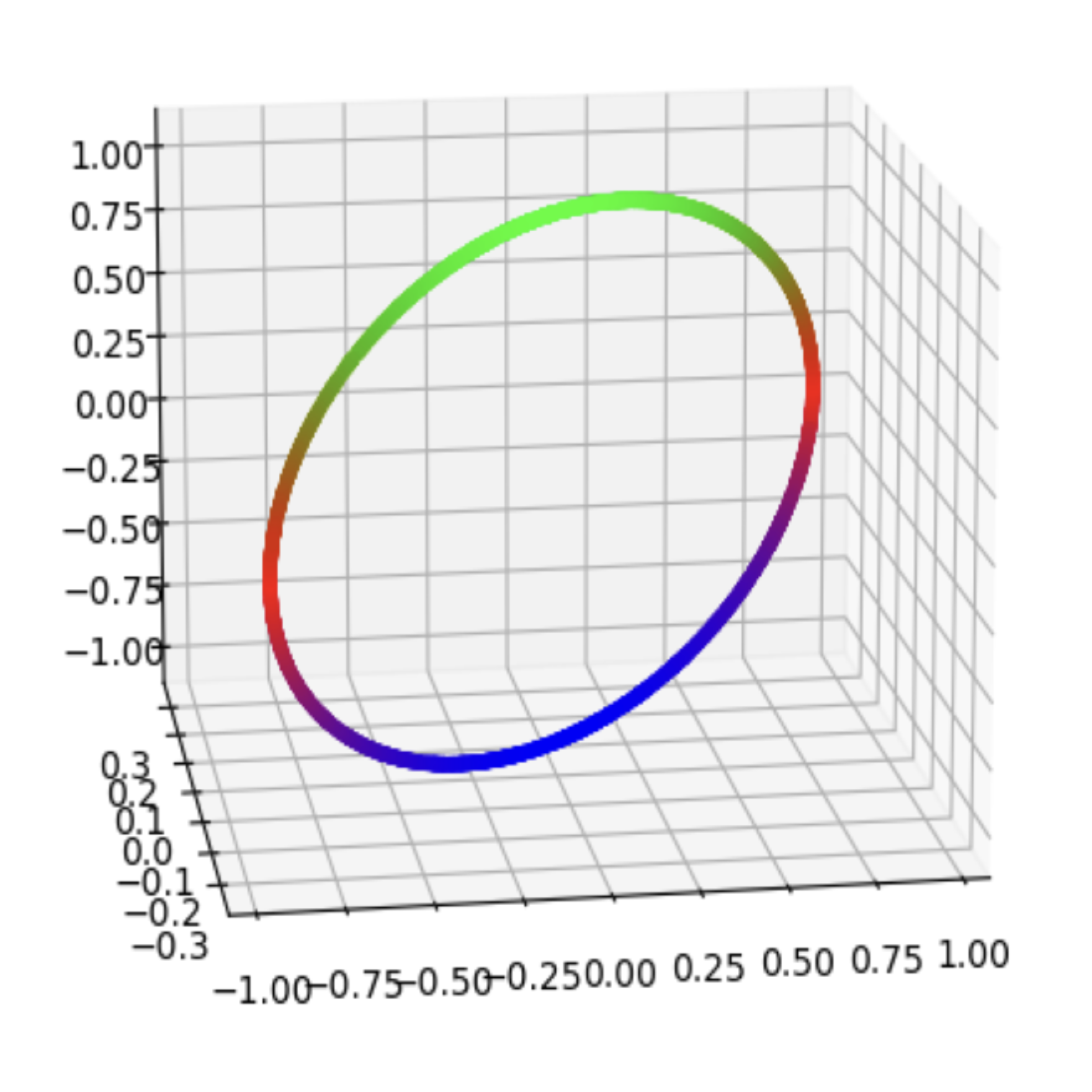}}
	\subfigure[Regression]{		\includegraphics[width=0.235\linewidth,height=0.25\linewidth]{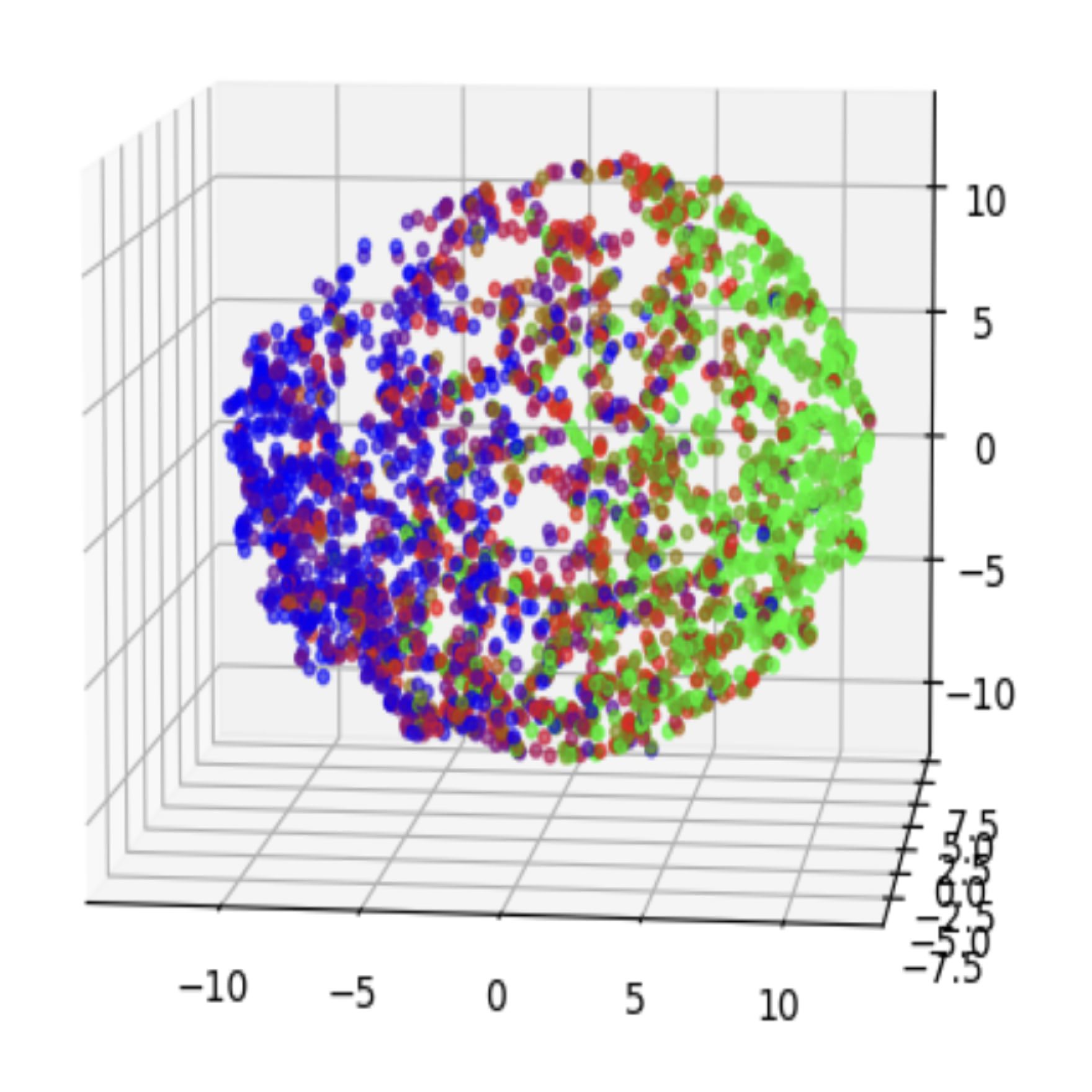}}
	\subfigure[+ PH-Reg]{	\includegraphics[width=0.235\linewidth,height=0.25\linewidth]{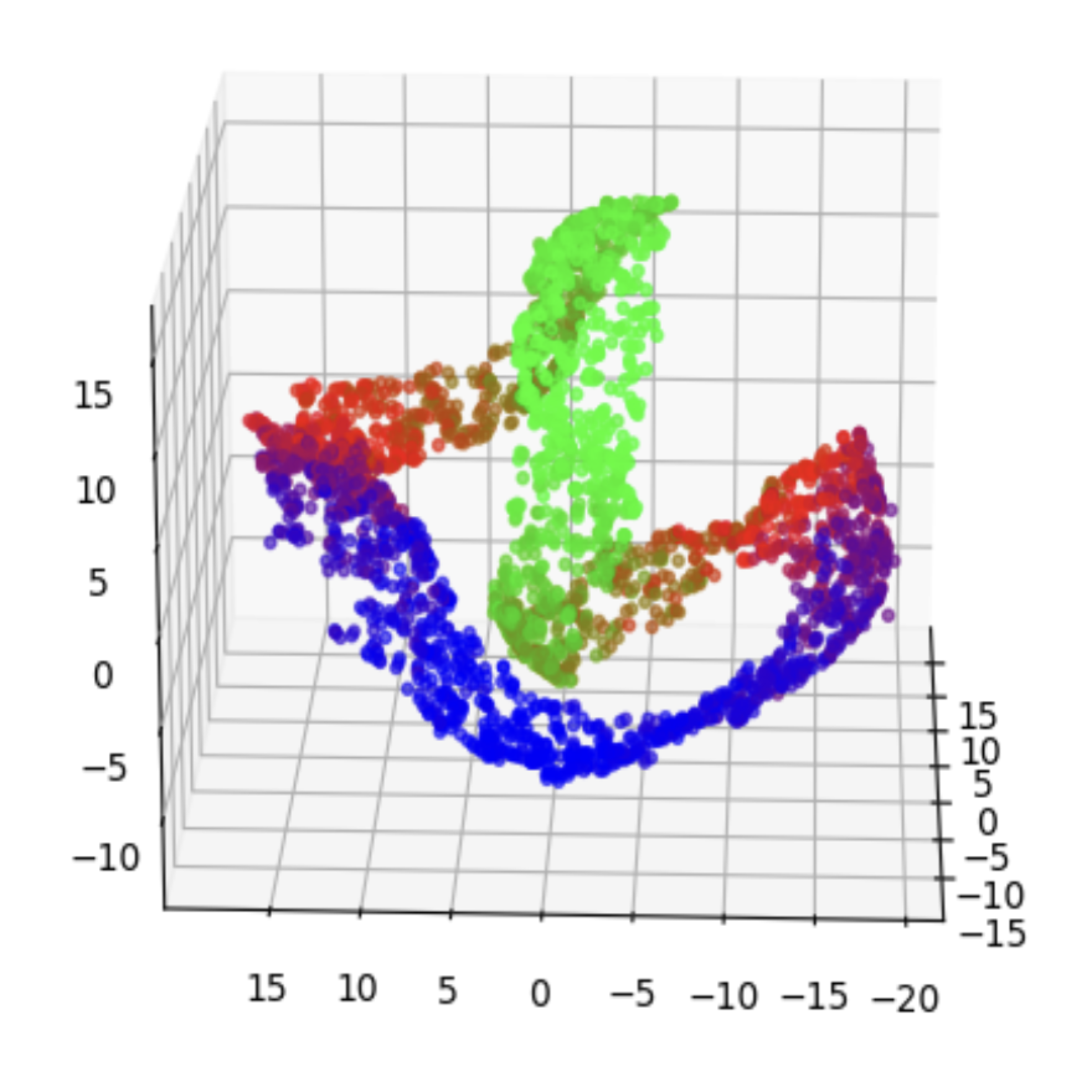}}
         \subfigure[+ $\mL_{ot}$]{\includegraphics[width=0.23\linewidth,height=0.25\linewidth]{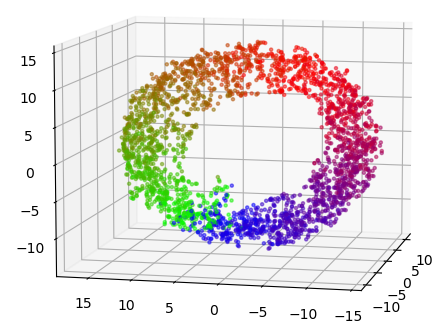}}
	\caption{Visualization of the feature manifolds, which shows that $\mL_{ot}$ preserves the local similarity relationships of the target space.
 \label{Figure:coordinate_prediction}
 }
\end{figure}

\subsection{Tightness and Ordinality affect each other}

\textbf{Compression for a better ordinality}.
We examine the impact of tightness on the ordinality. Table \ref{Table:TightnessToOrdinality} presents the Spearman's rank correlation coefficient \citep{spearman1961proof} and Kendall rank correlation coefficient \citep{kendall1938new} between the feature similarities (based on Cosine distance and Euclidean distance) and the label similarities. The two correlation coefficients measure how well ordinality is preserved.
Since tightness compresses the feature manifold, reducing its volume, we use volume as a proxy for tightness, and the volume is approximated by the mean of the similarities between samples.
The experiments are conducted on NYUD2-DIR, we randomly sample $1000$ pixels from a batch of $8$ test images. The label similarities are calculated as the Euclidean distances between the $1000$ pixels, while the corresponding feature similarities are the distances between their corresponding representations.
The results in Table \ref{Table:TightnessToOrdinality} show \rebuttal{standard regression fails to preserve the ordinality, while} MT and $\mL_{ot}$ both improve the ordinality, although they are designed to tighten the representations. Combining both has a similar effect on preserving ordinality as RankSim, which is specific designed for this purpose. The lower volumes of our method compared to the baseline indicate that the feature manifold is more compressed. We provide the visualization of the feature similarities in Appendix \ref{appendix:featureSimilarity}.

\begin{table}[t!]
	\caption{Correlation between feature and label similarities. Results are mean $\pm$ std dev over $10$ runs.}
	\centering
		\scalebox{0.8}{\begin{tabular}{c|c|ccc|ccc}
			\hline
			\multirow{2}[0]{*}{Method} &\multicolumn{1}{c|}{RMSE}& \multicolumn{3}{c|}{Cosine Distance} & \multicolumn{3}{c}{Euclidean Distance}  \\
 			\cline{3-8}
			&(ALL)& Spearman’s~$\uparrow$ & Kendall’s~$\uparrow$ & volume & Spearman’s~$\uparrow$ & Kendall’s~$\uparrow$ & volume\\
			\hline
			Baseline & 1.477 & 0.39 $\pm$ 0.15 & 0.27 $\pm$ 0.11 & 0.573 $\pm$ 0.071 & 0.35 $\pm$ 0.14 &  0.24 $\pm$ 0.10 & 7.72 $\pm$ 0.92\\
			+ RankSim & 1.522  & 0.09 $\pm$ 0.04 & 0.06 $\pm$ 0.03 & 0.000 $\pm$ 0.000 & 0.60 $\pm$ 0.16 &  0.44 $\pm$ 0.13 & 4.26 $\pm$ 1.29\\
			+ MT & 1.367  & 0.49 $\pm$ 0.14 & 0.34 $\pm$ 0.10 & 0.492 $\pm$ 0.047 & 0.47 $\pm$ 0.11 &  0.33 $\pm$ 0.08 & 6.56 $\pm$ 1.18\\
			+ $\mL_{ot}$ & 1.353 & 0.48 $\pm$ 0.16 & 0.34 $\pm$ 0.12 & 0.010 $\pm$ 0.003 & 0.42 $\pm$ 0.15 &  0.29 $\pm$ 0.11 & 5.57 $\pm$ 0.77\\
		+ MT + $\mL_{ot}$ & 1.321 & 0.64 $\pm$ 0.09 & 0.46 $\pm$ 0.08 & 0.006 $\pm$ 0.002 & 0.61 $\pm$ 0.11 &  0.44 $\pm$ 0.09 & 4.16 $\pm$ 0.51\\
            \hline
		\end{tabular}}
     \label{Table:TightnessToOrdinality}
\end{table}

\textbf{Ordinality for a better compression}. 
To further verify that preserving ordinality leads to better compression, we visualize the feature manifold of the depth estimation task in 3D space. This is done by changing the last hidden layer's feature space to three dimensions. As shown in Figure \ref{Figure:featurespace}, explicitly preserving the ordinality (i.e., +RankSim) compresses the feature manifold into a thin line, which shows a similar effect to explicitly tightening the representations (i.e., +MT).

\begin{figure}[!t]
	\centering
	\subfigure[Baseline]{	\includegraphics[width=0.3\linewidth,height=0.25\linewidth]{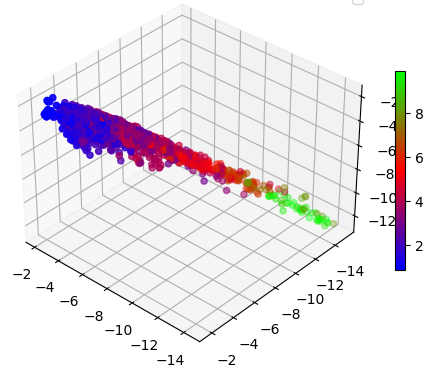}}
    \subfigure[+ RankSim]{\includegraphics[width=0.3\linewidth,height=0.25\linewidth]{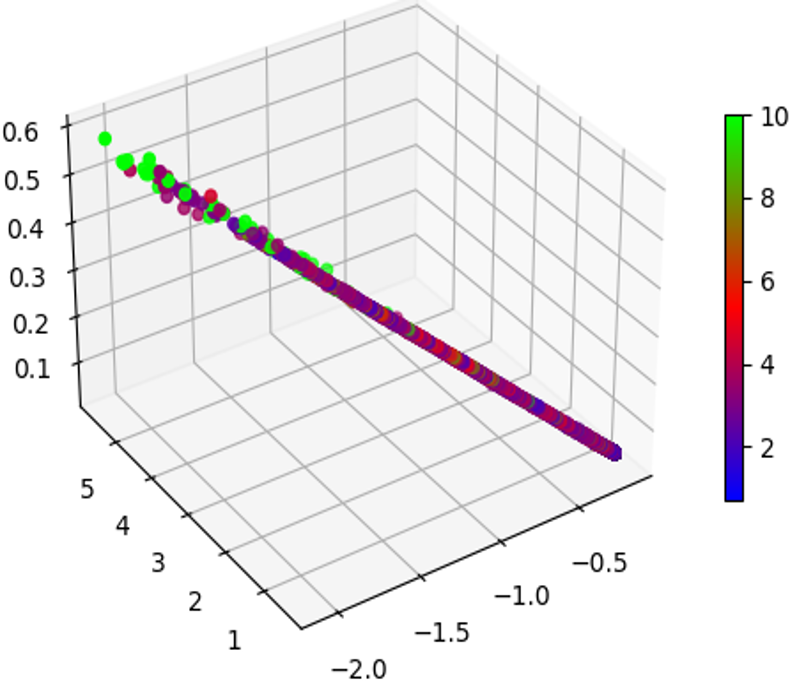}}
    \subfigure[+ MT]{\includegraphics[width=0.3\linewidth,height=0.25\linewidth]{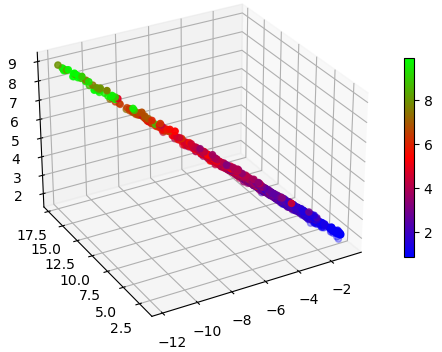}}
	\caption{Visualizations of the feature manifold \rebuttal{on NYUD2-DIR for depth estimation}. 
 Preserving the ordinality (+ RankSim) has an effect similar to MT, which explicitly tightens the representations.
 }
 \label{Figure:featurespace}
 \vspace{-0.2cm}
\end{figure}

\subsection{Tightness of regression}

Our theoretical analysis in Sec. \ref{section:analysis} focuses on the gradient direction of representations. However, in reality, the neural network updates its parameters to update the representations indirectly. 
Here, we verify our analysis by visualizing the updating of $\bz$ and $\bm{\theta}$ in the depth estimation task.

\textbf{The update of $\bz$}. We change the last hidden layer's feature space to $2$ dimensional for visualization. We randomly sample $1000$ pixels from a batch of $8$ images in the test set of NYUD2-DIR to visualize the feature manifold.
Figure \ref{Figure:ChangeofZ} displays the feature manifolds at epoch 1 (blue dots) and epoch 10 (the final epoch, red dots), the corresponding pixel representations are connected by black arrows. Aligned with our theoretical analysis, the directions of the representation updates follow the direction of $\bm{\theta}$. To verify this quantitatively, we calculate the principal component of the updating directions using PCA. We find that the cosine distance between this principal component and $\bm{\theta}$ is very small (0.03), indicating that the updating directions of representations from the beginning to the end of training follow the direction of $\bm{\theta}$. 
The visualization also shows that the feature manifold tightened limited in the direction perpendicular to $\bm{\theta}$ throughout the training. 
The visualizations of feature manifolds at each epoch are provided in Appendix \ref{appendix:updatingofZ}, which reveals the tightening effect in the direction perpendicular to $\bm{\theta}$ between adjacent epochs is even smaller.

\textbf{The update of $\bm{\theta}$}. 
As discussed in Sec. \ref{section:analysis}, the effect of $\bz_i$ on the direction of $\bm{\theta}$ tend to offset each other and result in a limited impact, while changes in the directions of $\bm{\theta}_k$ in classification are generally greater.
Here we quantitatively verify this by calculating the cosine distances between $\Tilde{\theta}$ and $\bm{\theta}$ at each epoch from $1$ to $10$ (final epoch), where $\Tilde{\theta}$ represents $\bm{\theta}$ at epoch $10$. We also convert this regression task into a classification task by uniformly discretizing the target range into $10$ classes, and monitoring the change of $\bm{\theta}_{k}$ in the same way. As shown in Figure \ref{Figure:ChangeofTheta}, the changes is $\bm{\theta}_{k}$ are all larger than the changes in $\bm{\theta}$. The maximum cosine distance between $\bm{\theta}$ at different epochs is very small (i.e., $0.0004$), which also verified the limited change of $\bm{\theta}$.

\textbf{Multiple $\bm{\theta}$s}. 
Adding additional $\bm{\theta}$s (our MT strategy) \shihao{, with random initialization,} does not change the updating speed of $\bm{\theta}$ (see Figure \ref{figure:appendix:multipleTheta} in the Appendix \ref{appendix:multipleTheta}). The updating directions of all the $\bm{\theta}$s are even aligned. Let $v_{\bm{\theta}}^{i} = \bm{\theta}^{i+1} - \bm{\theta}^{i}$ be the updating vector of $\bm{\theta}$ at iteration $i$. Figure \ref{Figure:ChangeofMultipleTheta} plots the set of points $\{v_{\bm{\theta}}^{i}| i=500k, k \in \mZ, 0 \leq k \leq 100 \}$ for three $\bm{\theta}$s. This visualizes the change of $\bm{\theta}$s throughout the training process. The three $\bm{\theta}$s are distributed along a line, with the original as the center. When we calculate the principle components of $\{v_{\bm{\theta}}\}$ for the three $\bm{\theta}$s using PCA, the maximum cosine distances between the three principle components are less than $1e-4$, which quantitatively shows the updating directions of all $\bm{\theta}$s are in the same direction. As shown in Eq. \ref{equation:theta}, $v_{\bm{\theta}}$ is the weighted mean of a batch of $\bz$. Different $\bm{\theta}$ leads to different magnitudes of the weight means, while the directions remain steady.
It is worth mentioning that the multiple $\bm{\theta}$s do not collapse to a single $\bm{\theta}$ 
\rebuttal{in reality, although their updating directions are the same. This is because $\bm{\theta}$s are randomly initialized, and their directions remain nearly identical during training (See Figure \ref{Figure:ChangeofTheta}), due to the three reasons: 
1) The magnitude of $\frac{\partial \mL_{\text{reg}}}{\partial \bm{\theta}}$ is `scaled' by $\bw_i$, since $\bw_i$ often follows a Gaussian distribution centered at the origin, as assumed in models like Bayesian linear regression. When $\bw$ and $\bz$ are independent or weak dependent, $\mathbb{E}[\bw_i\bz_i]$ will approach 0 and causing $\frac{\partial \mL_{\text{reg}}}{\partial \bm{\theta}}$ be `scaled' to 0.
2) According to the central limit theorem, the updates of $\bm{\theta}$ follow a Gaussian distribution. This causes partial offsets between updates and results in a reduced accumulated effect. In addition, we empirically observe the mean of this Gaussian distribution approaches 0 (see Figure \ref{Figure:ChangeofMultipleTheta}), indicating that $\bw$ and $\bz$ are independent or weakly dependent.
3) The effect of $\bz_i$ on the direction of $\frac{\partial \mL_{\text{reg}}}{\partial \bm{\theta}}$ over a batch of samples offsets each other, resulting in the stability of the direction of $\bm{\theta}$ throughout training. This occurs because $\bz_i$ tends to be distributed around $\bm{\theta}$.} 
More details can be found in Appendix \ref{appendix:multipleTheta}.

\begin{figure}[!t]
	\centering
     \subfigure[Update of $\bz$]
    {\label{Figure:ChangeofZ}
\includegraphics[width=0.25\linewidth,height=0.24\linewidth]{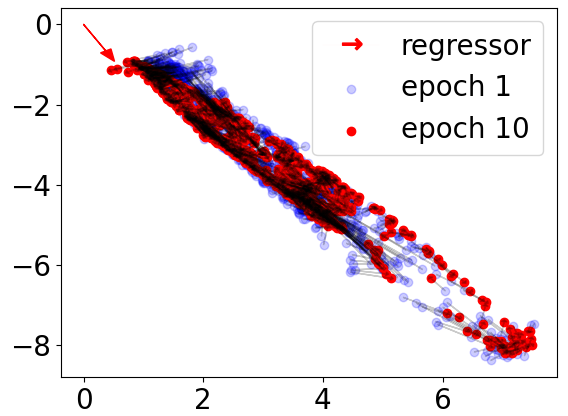}}
	\subfigure[Update of $\bm{\theta}$]{	\label{Figure:ChangeofTheta}\includegraphics[width=0.3\linewidth,height=0.24\linewidth]{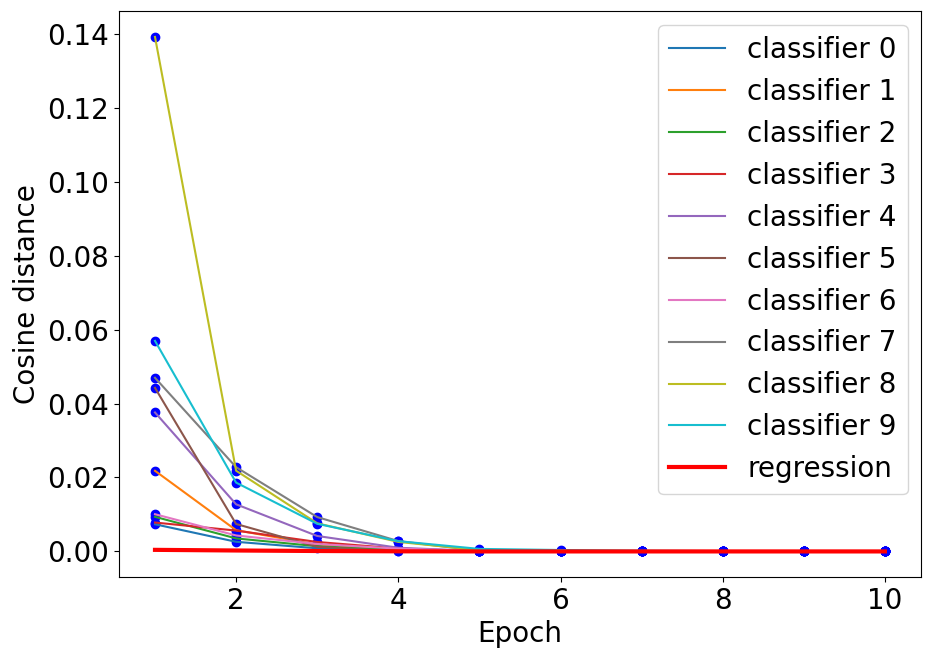}}
    \subfigure[Update directions of $\bm{\theta}$]{\label{Figure:ChangeofMultipleTheta}\includegraphics[width=0.30\linewidth,height=0.24\linewidth]{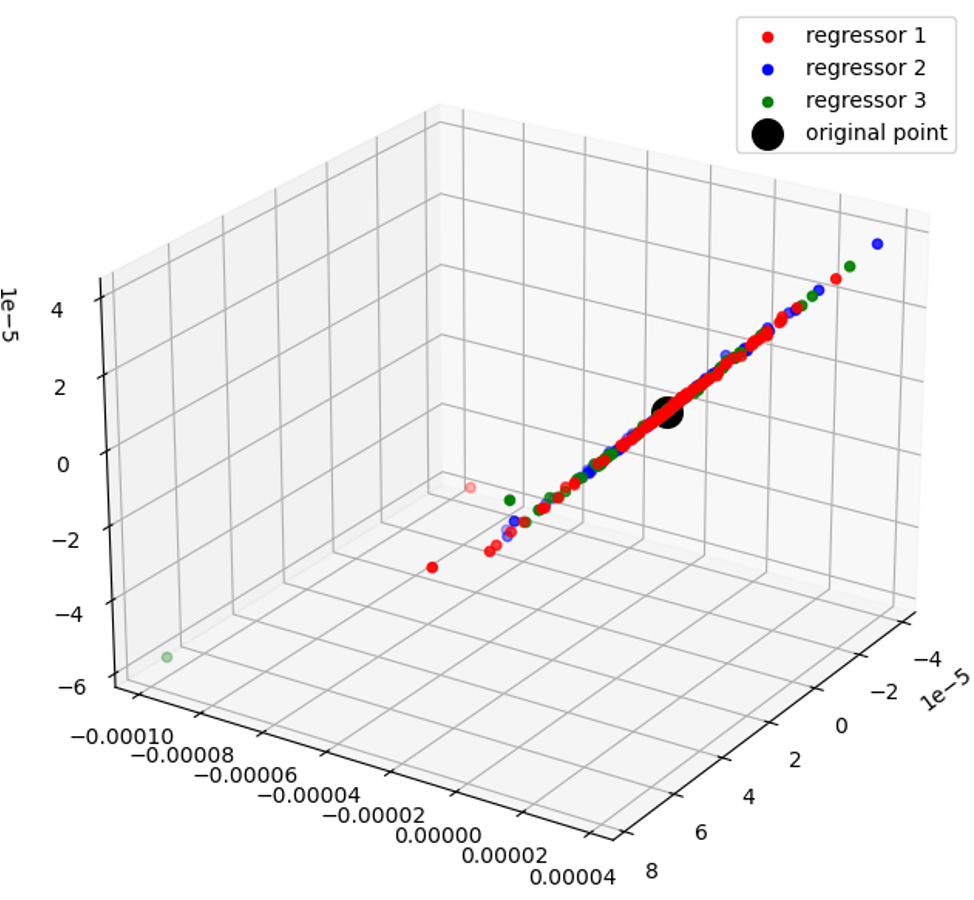}}
 	\caption{(a) Visualization of the $\bz$ update, which aligns with $\theta$, (b) $\bm{\theta}$ update, which is steady through the training process, (c) the updating directions of $\bm{\theta}$s, which distributed along a line, with the original as the center.}
\end{figure}

\subsection{ablation study}
We conduct the ablation study on AgeDB-DIR for age estimations. The results are given in Table \ref{table:ablation}.

\begin{table}[t!]
	\caption{Ablation study on AgeDB-DIR for age estimation. We report MAE mean $\pm$ standard deviation \textbf{over 10 runs}, and the default $\lambda$, $\gamma$ and $M$ are set to $100$, $0.1$ and $8$, respectively.}
	\centering

		\scalebox{0.9}{\begin{tabular}{l|ccccc}
			\hline
			\multirow{1}[0]{*}{Method}
			& ALL & Many & Med. & Few  \\
			\hline
			Baseline & 7.80 $\pm$ 0.12 & 6.80 $\pm$ 0.06 & 9.11	$\pm$ 0.31 & 13.63	$\pm$ 0.43  \\
      \hline
			+ $\mL_{oe}$ &  &    &   &    \\
       $\lambda=1$   & 7.68	$\pm$ 0.08 & 6.75	$\pm$ 0.11 & 8.81	$\pm$ 0.19 & 13.38	$\pm$ 0.37  \\
         $\lambda=10$ & 7.55	$\pm$ 0.05 & 6.64	$\pm$ 0.07 &  8.71	$\pm$ 0.12 & 12.88	$\pm$ 0.35  \\
   $\lambda=100$ & 7.36	$\pm$ 0.08 & 6.55	$\pm$ 0.07 &  8.40	$\pm$ 0.14 & 12.14	$\pm$ 0.33  \\
   $\lambda=1000$  & 8.80	$\pm$ 0.19 & 7.17	$\pm$ 0.10 &  11.32	$\pm$ 0.48 & 17.26	$\pm$ 0.53  \\
    $\gamma=0.1$   & 7.36	$\pm$ 0.08 & 6.55	$\pm$ 0.07 &  8.40	$\pm$ 0.14 & 12.14	$\pm$ 0.33  \\
    $\gamma=1$   & 7.47	$\pm$ 0.12 &  6.61	$\pm$ 0.08  & 8.57	$\pm$ 0.31  & 12.55	$\pm$ 0.49  \\
    $\gamma=10$   & 7.51	$\pm$ 0.07 &  6.63	$\pm$ 0.08  &  8.60	$\pm$ 0.25 & 12.75	$\pm$ 0.31  \\
\hline
+ MT &  &    &   &    \\
M=2 & 7.72	$\pm$ 0.11 &  6.77	$\pm$ 0.07  &  8.92	$\pm$ 0.20 &  13.37	$\pm$ 0.50   \\
M=4  & 7.74	$\pm$ 0.06 &  6.77	$\pm$ 0.10  & 8.96	$\pm$ 0.13  &   13.52	$\pm$ 0.41  \\
M=8  &    7.67	$\pm$ 0.06 & 6.72	$\pm$ 0.08 & 8.87	$\pm$ 0.13 &  13.36	$\pm$ 0.16     \\
M=16  & 7.70	$\pm$ 0.11 &  6.73	$\pm$ 0.13  &8.93	$\pm$ 0.25   &   13.44	$\pm$ 0.36  \\
M=32   & 7.71	$\pm$ 0.08 & 6.74	$\pm$ 0.10   & 8.96	$\pm$ 0.32  &   13.40	$\pm$ 0.32  \\
noise & 8.00	$\pm$ 0.23 &  6.91	$\pm$ 0.20  & 9.43	$\pm$ 0.33  &  14.36	$\pm$ 0.67   \\
\hline
		\end{tabular}}
   \label{table:ablation}
\end{table}

\textbf{Hyperparameter $\lambda$,  $\gamma$}.  
We maintain the $\lambda$,  $\gamma$ at their default value $100$, $0.1$, and very one of them to examine their impact. 
As shown in Table \ref{table:ablation}, The MAE (ALL) decreases consistently as $\lambda$ increases. However, it overtakes the task-specific learning target when set too high (e.g., 1000) and decreases the performance. For the $\gamma$, the MAE (ALL) decreases consistently as $\gamma$ decreases. However, we empirically find a too low $\gamma$ (e.g., 0.01) will easily result in NaN values when calculating the transport matrixes using the Sinkhorn algorithm \citep{NIPS2013_af21d0c9}. We thus set $\gamma$ to be $0.1$.

\textbf{Number of the total targets $M$}. As shown in Table \ref{table:ablation}, the performance generally improves with the increase of $M$, when $M \leq 8$, and it keeps steady when $M$ increases further. 
\rebuttal{The primary factor affect the selection of $M$ is the intrinsic dimension of the feature manifold, which determines how many additional constraints (i.e., $M-1$) are required to compress the manifold. The range of $\bY$ have a limited impact on the selection of $M$, as $M$ equals $8$ works well on NYUD2-DIR ($y \in [0.7, 10]$) and AgeDB-DIR ( $y \in [0, 101]$)}.

\textbf{Mean $\hat{y}$ vs. a single $\hat{y}$}. We verify the strategy of the mean operation in MT (see Eq. \ref{equation:MT}), \rebuttal{which potentially bring in an ensemble effect}. We find $\hat{y}_i^t$ are very similar for all $t$. 
For a model with $M$,
the MAE(ALL) results calculated by $\hat{y}_i^t$ for $t \in T$ is with mean equals $7.579$ and standard deviation $0.0003$.
Thus \rebuttal{the improvement of MT is not due to the ensemble effect, and} the mean operation is optional.

\textbf{Additional $y$ vs. noise}. Add additional targets as noise, as shown in Table \ref{table:ablation}, does not work. 

\textbf{Time and memory consumption}.
We monitor the time and memory consumption for training a model from the beginning to the end with a batch size equal to 128. Table \ref{Table:timeandMemory} shows the added memory is negligible($1.7\%$), and the added time is limited ($17$ min).

\begin{table}[t!]
	\caption{Time and memory consumption.
	}
	\centering
		\scalebox{1}{\begin{tabular}{c|ccc}
			\hline
			\multirow{1}[0]{*}{Method}
			& Time (mins) & Memory(MB)   \\
			\hline
			Baseline & 65  & 14433   \\
			+ MT & 70 & 14457   \\
			+ $\mL_{ot}$ & 74  & 14587   \\
			+ MT + $\mL_{ot}$ & 82  & 14689   \\
   \hline
		\end{tabular}}
 \label{Table:timeandMemory}
\end{table}

\section{Conclusion}
In this paper, for the regression task, we provide a theoretical analysis that suggests preserving ordinality enhances the representation tightness, and regression suffers from a weak ability to tighten the representations. Motivated by classification and the self entropic optimal transport, we introduce a simple yet effective method to tighten regression representations. 

\textbf{Acknowledgement}. This research / project is supported by the Ministry of Education, Singapore, under the Academic Research Fund Tier 1 (FY2022), National Natural Science Foundation of China (62206061, U24A20233),
Guangdong Basic and Applied Basic Research Foundation (2024A1515011901),
Guangzhou Basic and Applied Basic Research Foundation (2023A04J1700).

\bibliography{iclr2025_conference}
\bibliographystyle{iclr2025_conference}

\newpage
\appendix
\section{Appendix}

\subsection{Proof of Theorem \ref{thm:ranktoTightness}}
\label{appendix:proof_1}
\textbf{Theorem \ref{thm:ranktoTightness}}
\emph{
    Let $\mB(\bz,\epsilon) = \{\bz' \in \mZ| d(\bz, \bz') \leq \epsilon \}$ be the closed ball center at $\bz$ with radius $\epsilon$. 
    Assume that $\forall (\bx, \bz, y) \in \mP$ and $ \forall \epsilon >0, \exists (\bx', \bz', y') \in \mP$ such that $\bz'\in \mB(\bz,\epsilon)$ and $y' \neq y$. 
    Then if the ordinality is perfectly preserved, $\forall (\bx_i, \bz_i, y_i), (\bx_j, \bz_j, y_j) \in \mP$, the following hold: $y_i = y_j \Rightarrow d(\bz_i,\bz_j) = 0$.}
    
\begin{proof}
    \begin{align}
        d(\bz_i, \bz_j) &= d(\bz_i - \bz_k + \bz_k - \bz_j)\\
        & \leq d(\bz_i - \bz_k) + d(\bz_k - \bz_j),
    \end{align}
    where $\bz_k \in \mB(\bz,\epsilon)$. Since $d(y_k, y_j) \leq d(y_k, y_i)$, and the ordinality is perfectly preserved, we have:
    \begin{align}
        d(\bz_k - \bz_j) \leq d(\bz_i - \bz_k).
    \end{align}
    Thus:
    \begin{align}
        0 \leq d(\bz_i, \bz_j)\leq 2d(\bz_i - \bz_k) \leq 2\epsilon.
    \end{align}
    Let $\epsilon \rightarrow 0$, the result follows.
\end{proof}

\subsection{Proof of Theorem \ref{thm：gradient}}
\label{appendix:theorem2}
We first give a lemma:
\begin{lemma}
\label{lemma_gradient}
    Let $S_y = \{\bz| g(\bz) = y\}$ be a convex set, where $\bz \in \mmR^n$ is the representation, $y$ is the target and $g$ is the regressor. Assume $g$ is differentiable, then $\forall \bz_k, \bz_i, \bz_j \in S_y$, we have:
    \begin{align}
        \nabla g(\bz_k)(\bz_i-\bz_j)= 0.
    \end{align}
\end{lemma}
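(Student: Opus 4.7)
The plan is to exploit convexity of the level set $S_y$ to reduce the statement to a one-dimensional chain-rule computation. For any two points $\bz_k, \bz \in S_y$, convexity implies that the segment $\{(1-t)\bz_k + t\bz : t \in [0,1]\}$ is entirely contained in $S_y$, so $g$ is constant (equal to $y$) along this segment. This allows me to turn the gradient identity into a statement about the derivative of a constant function of one variable.

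Concretely, I would define $h : [0,1] \to \mmR$ by $h(t) = g((1-t)\bz_k + t\bz)$. Since $g$ is differentiable and the segment lies in $S_y$, $h$ is differentiable with $h(t) \equiv y$, hence $h'(t) \equiv 0$. The chain rule gives
\begin{align}
    h'(t) = \nabla g\bigl((1-t)\bz_k + t\bz\bigr) \cdot (\bz - \bz_k),
\end{align}
and evaluating at $t = 0$ yields $\nabla g(\bz_k) \cdot (\bz - \bz_k) = 0$, i.e., $\nabla g(\bz_k) \cdot \bz = \nabla g(\bz_k) \cdot \bz_k$.

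Since the right-hand side does not depend on the choice of $\bz \in S_y$, the linear functional $\bz \mapsto \nabla g(\bz_k) \cdot \bz$ is constant on $S_y$. Applying this once with $\bz = \bz_i$ and once with $\bz = \bz_j$ and subtracting delivers the claim $\nabla g(\bz_k)(\bz_i - \bz_j) = 0$.

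I do not expect any substantive obstacle: the argument is a direct chain-rule calculation, and the only delicate point is being careful that the chord is anchored at $\bz_k$ (not between $\bz_i$ and $\bz_j$), so that the gradient gets evaluated exactly at the prescribed base point $\bz_k$. Using two separate segments from $\bz_k$ to $\bz_i$ and from $\bz_k$ to $\bz_j$ avoids needing $\bz_k$ to sit on the chord between $\bz_i$ and $\bz_j$, and the differentiability hypothesis on $g$ together with the convexity hypothesis on $S_y$ are exactly what is needed — no additional regularity (e.g., continuous differentiability, or that $\nabla g(\bz_k) \neq 0$) is required.
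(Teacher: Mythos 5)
Your proposal is correct and is essentially the paper's own argument: the paper likewise anchors two segments at $\bz_k$ (namely $(1-\epsilon)\bz_k + \epsilon\bz_i$ and the analogue for $\bz_j$), uses convexity to keep them inside $S_y$, extracts $\nabla g(\bz_k)(\bz_i - \bz_k) = 0$ and $\nabla g(\bz_k)(\bz_j - \bz_k) = 0$ via a first-order Taylor expansion (your chain-rule computation on $h(t)$ is the same step in different notation), and concludes by linearity. The only cosmetic difference is that you package the directional derivative as the derivative of a constant one-variable function, while the paper writes out the $o(\epsilon)$ bookkeeping explicitly.
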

\begin{proof}
    Let $\bz_k^\epsilon = (1- \epsilon)\bz_k + \epsilon \bz_i$, where $\epsilon \in [0, 1]$. Since $g$ is differentiable, using Taylor expansion, we have: 
    \begin{align}
        g(\bz_k^\epsilon) &=g((1- \epsilon)\bz_k + \epsilon \bz_i)
        \\&= g(\bz_k + \epsilon(\bz_i - \bz_k))\\
        &= g(\bz_k) + \epsilon\nabla g(\bz_k)(\bz_i - \bz_k) + o(\epsilon).
    \end{align}
    Since $S_y$ is a convex set, we have $\bz_k^\epsilon \in S_y$. Thus:
    \begin{align}
        g(\bz_k^\epsilon) &= g(\bz_k) + \epsilon\nabla g(\bz_k)(\bz_i - \bz_k) + o(\epsilon)\\
        y &=y + \epsilon\nabla g(\bz_k)(\bz_i - \bz_k) + o(\epsilon) \\
        \frac{o(\epsilon)}{\epsilon}&= \nabla g(\bz_k)(\bz_k - \bz_i).
    \end{align}    
    Let $\epsilon \rightarrow 0$:
    \begin{align}
        \nabla g(\bz_k)(\bz_k - \bz_i) = \lim_{\epsilon \rightarrow 0}  \frac{o(\epsilon)}{\epsilon} =0.
    \end{align}
Similarly, we have:
\begin{align}
    \nabla g(\bz_k)(\bz_k - \bz_j) = 0.
\end{align}
Combining the two equations above, we have:
\begin{align}
    \nabla g(\bz_k)(\bz_i - \bz_j) &= \nabla g(\bz_k)(\bz_i - \bz_k + \bz_k - \bz_j)\\
    &= \nabla g(\bz_k)(\bz_i - \bz_k) + \nabla g(\bz_k)(\bz_k - \bz_j)\\
    &=0.
\end{align}
\end{proof}


\textbf{Theorem \ref{thm：gradient}}
\emph{
     Assume $f_{\bm{\theta}}$ is differentiable and $S_{y'_i}$ is a convex set, then $\forall \bz'_i, \bz'_j \in S_{y'_i}$:
    \begin{align}
        \frac{\partial \mL_{\text{reg}}}{\partial \bz_i} (\bz'_i - \bz'_j) =0,
    \end{align}
    where $y'_i$ is the predicted target of $\bz_i$.}
    
\begin{proof}
\begin{align}
    \frac{\partial \mL_{\text{reg}}}{\partial \bz_i} &= \frac{\partial \mL_{\text{reg}}(g(\bz_i)-y_i)}{\partial \bz_i}\\
    &=\frac{\partial \mL_{\text{reg}}(g(\bz_i)-y_i)}{\partial (g(\bz_i)-y_i)}\frac{\partial (g(\bz_i)-y_i)}{\partial \bz_i}\\
    &= \mL'_{\text{reg}}(g(\bz_i)-y_i) \nabla g(\bz_i).
\end{align}

Based on Lemma \ref{lemma_gradient}, we have:
    \begin{align}
        \nabla g(\bz_i)(\bz'_i-\bz'_j)= 0.
    \end{align}

Thus,
    \begin{align}
        \frac{\partial \mL_{\text{reg}}}{\partial \bz_i} (\bz'_i - \bz'_j) &= \mL'_{\text{reg}}(g(\bz_i)-y_i) \nabla g(\bz_i)(\bz'_i - \bz'_j)\\
        &= \mL'_{\text{reg}}(g(\bz_i)-y_i) \times 0 \\
        &=0.
    \end{align}
\end{proof}

\section{Details}

\subsection{Details about the real-world tasks}
\label{appendix:details_real_world_tasks}
For the age estimation on AgeDB-DIR, we adopt the suggested hyper-parameters to train the RankSim, where $\lambda, \gamma$ are set to $2, 1000$, and the results of OE and PH-Reg are adopted from their published papers. the evaluation metric MAE: $\frac{1}{N}\sum_{i=1}^{N} |y_i - y'_i|$, where N is the total number of samples, $y_i, y'_i$ are the label and predicted result.

For depth estimation on NYUD2-DIR, we adopt the suggested hyper-parameters of OE and PH-Reg to train the models. For RankSim, we train the model with $\gamma$ range from $1$ to $1000$. We report the best results for all the three baselines. The evaluation metric: threshold accuracy $\delta_1 \triangleq$ \% of $y_p, ~\mathrm{s.t.}~ \max(\frac{y_p}{y'_p}, \frac{y'_p}{y_p}) < 1.25$, and the root mean squared error (RMS):  $\sqrt{\frac{1}{n}\sum_p(y_p - y'_p)^2}$.

\section{visualizations}
\label{appendix:visualizations}
\subsection{Visualization of the updating of $\bz$}
\label{appendix:featureSimilarity}
We provide the visualization of the feature similarities in Figure. \ref{Figure:similarity_matrix}. 

\begin{figure}[!t]
	\centering
	\subfigure[Baseline]{	\includegraphics[width=0.19\linewidth,height=0.23\linewidth]{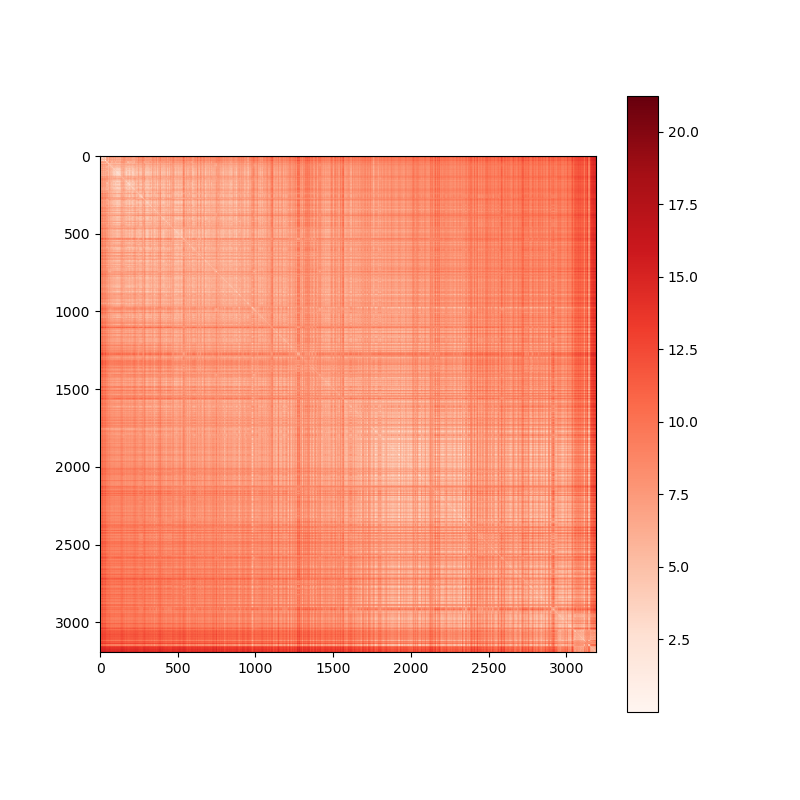}}
	\subfigure[+ RankSim]{		\includegraphics[width=0.19\linewidth,height=0.23\linewidth]{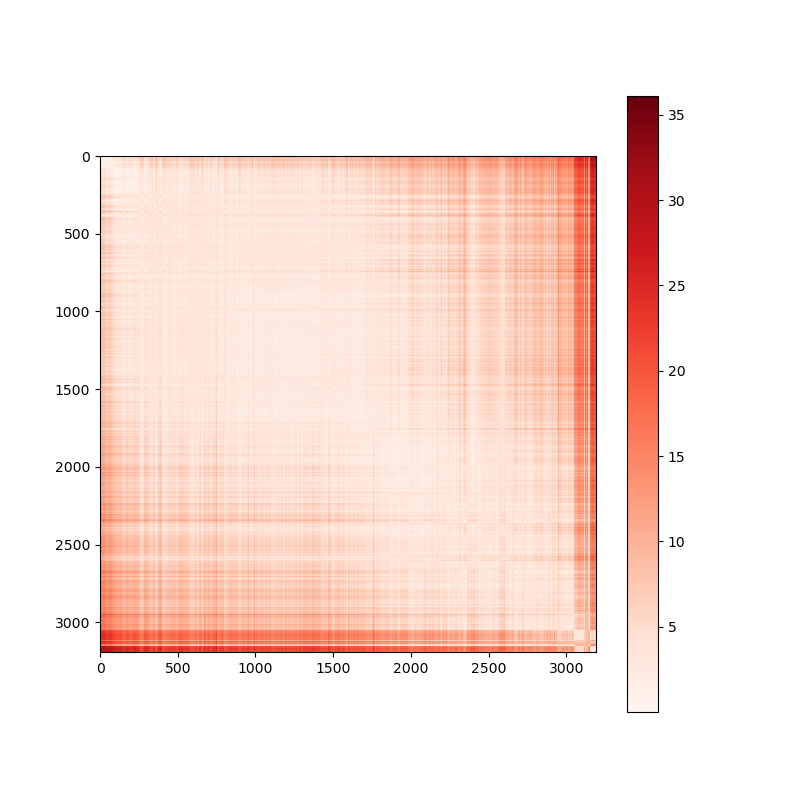}}
	\subfigure[+ MT]{	\includegraphics[width=0.19\linewidth,height=0.23\linewidth]{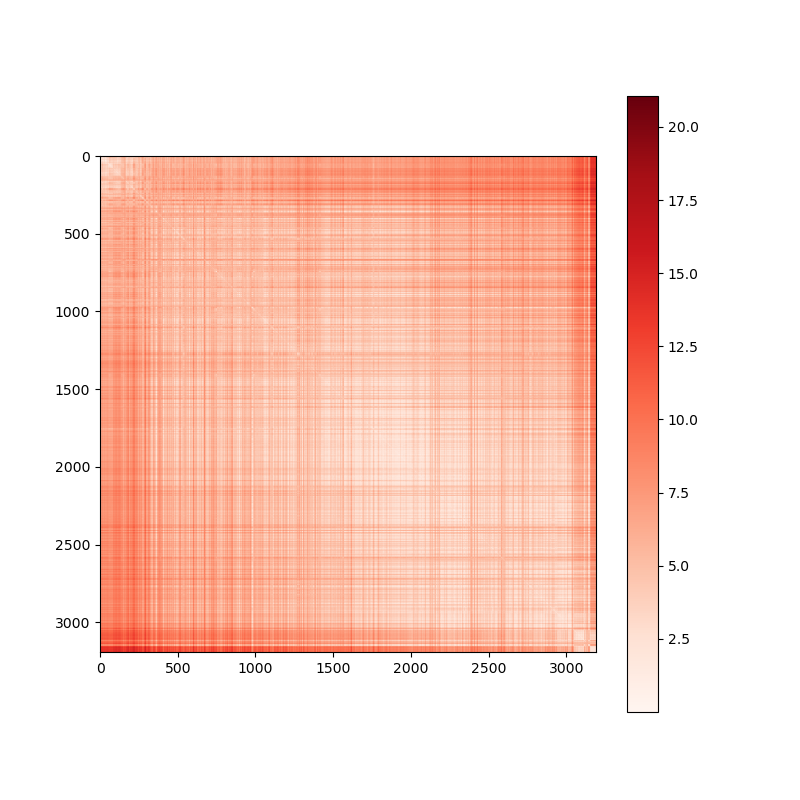}}
         \subfigure[+ $\mL_{ot}$]{\includegraphics[width=0.19\linewidth,height=0.23\linewidth]{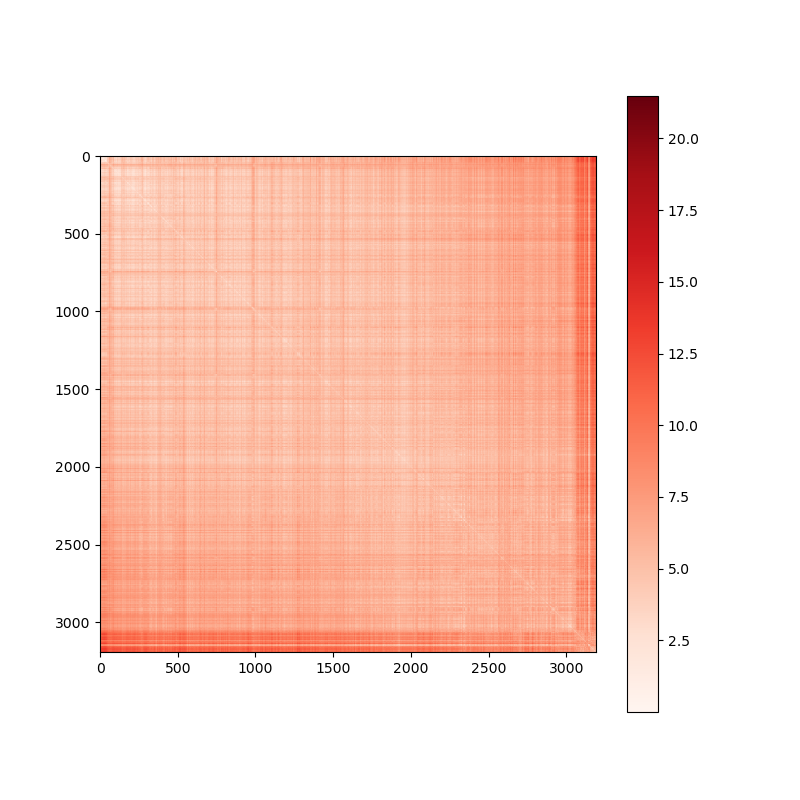}}
        \subfigure[+ MT + $\mL_{ot}$]{\includegraphics[width=0.19\linewidth,height=0.23\linewidth]{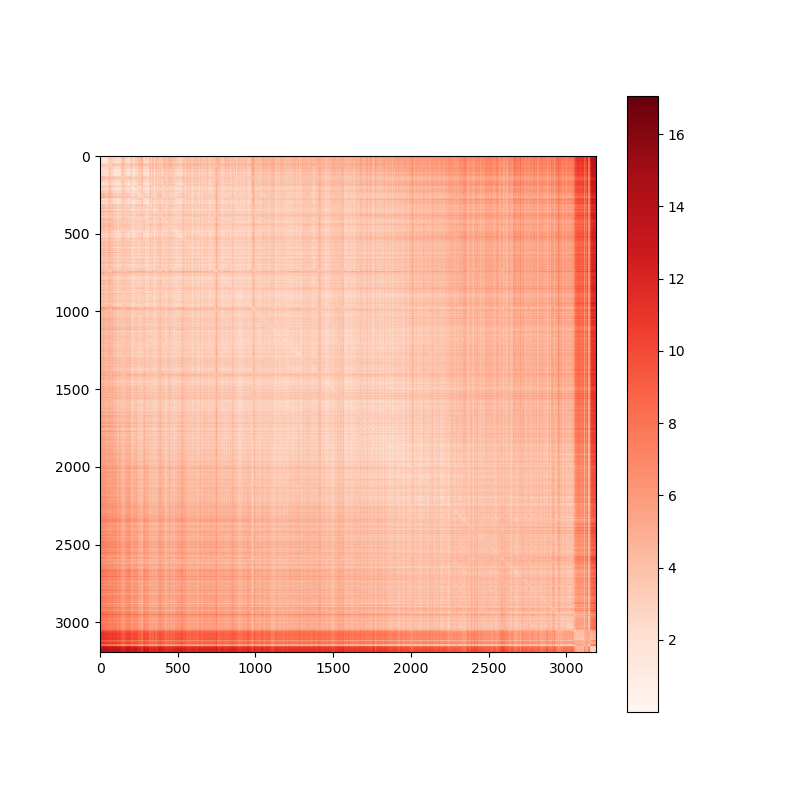}}
	\caption{Feature similarity matrices (Eucildean Distance). Tightening the representations results in a better ordinality.
 }
 \label{Figure:similarity_matrix}
\end{figure}

\subsection{Visualization of the updating of $\bz$}
\label{appendix:updatingofZ}
The visualizations of feature manifolds at each epoch are provided in Figure \ref{Figure:zAtEachEpoch}. 
\rebuttal{
For the neural collapse of regression, the feature manifold will collapse into a single line when the target space is a line and the compression is maximized \citep{zhang2024deep}. This trend can be observed in Figure \ref{Figure:zAtEachEpoch}, where the feature manifold looks like a thick line and evolves toward a thinner line over training. However, standard regression’s limited ability to tighten representations results in a slower collapse. In contrast, our proposed method and RankSim both accelerate this collapse, as shown in Figure \ref{Figure:featurespace}.}
\begin{figure}[!t]
	\centering
	\subfigure[epoch 1 to 2]{	\includegraphics[width=0.32\linewidth,height=0.33\linewidth]{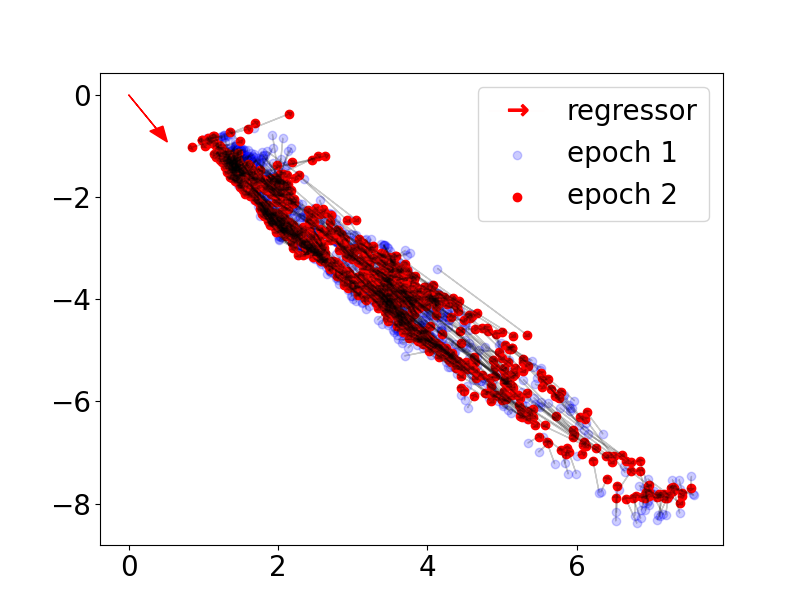}}
 	\subfigure[epoch 2 to 3]{	\includegraphics[width=0.32\linewidth,height=0.33\linewidth]{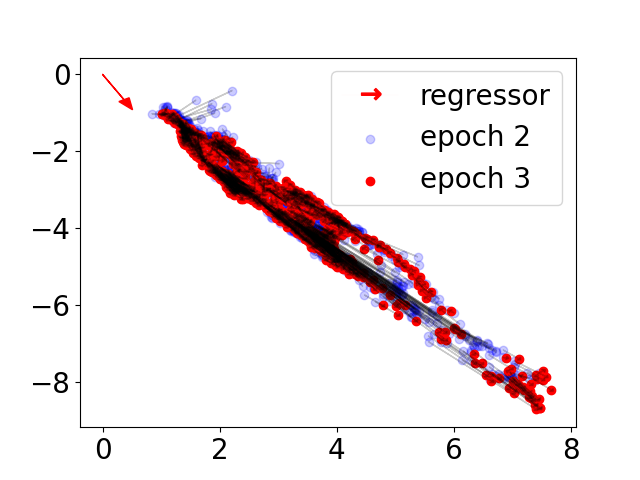}}
  	\subfigure[epoch 3 to 4]{	\includegraphics[width=0.32\linewidth,height=0.33\linewidth]{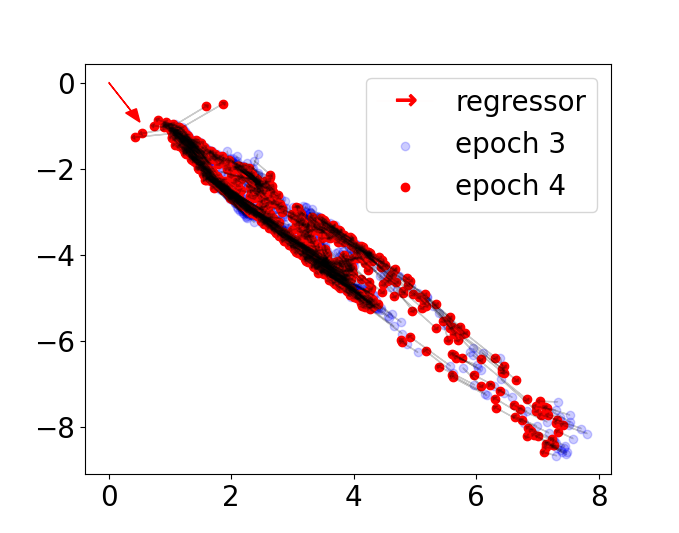}}
   	\subfigure[epoch 4 to 5]{	\includegraphics[width=0.32\linewidth,height=0.33\linewidth]{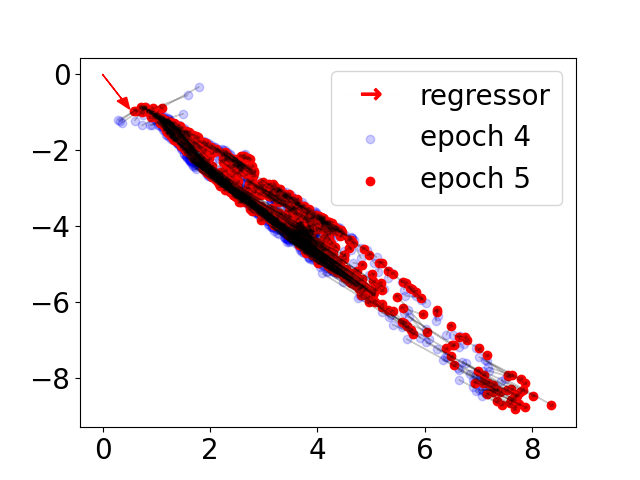}}
    	\subfigure[epoch 5 to 6]{	\includegraphics[width=0.32\linewidth,height=0.33\linewidth]{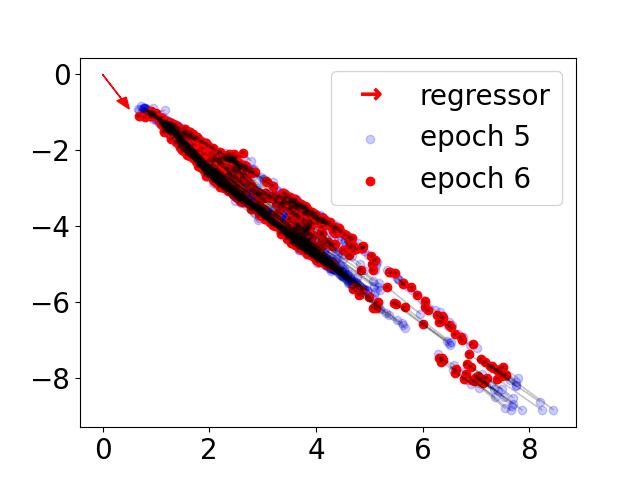}}
     	\subfigure[epoch 6 to 7]{	\includegraphics[width=0.32\linewidth,height=0.33\linewidth]{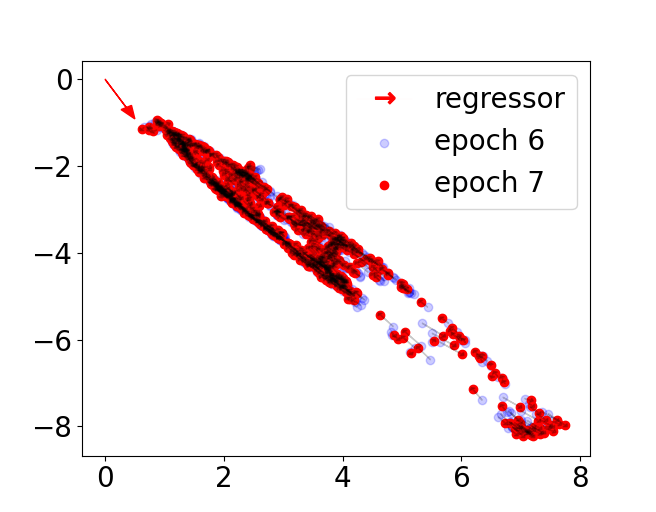}}
      	\subfigure[epoch 7 to 8]{	\includegraphics[width=0.32\linewidth,height=0.33\linewidth]{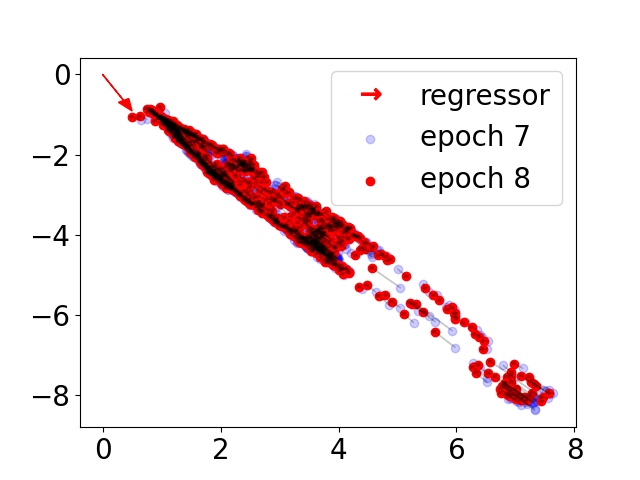}}
       	\subfigure[epoch 8 to 9]{	\includegraphics[width=0.32\linewidth,height=0.33\linewidth]{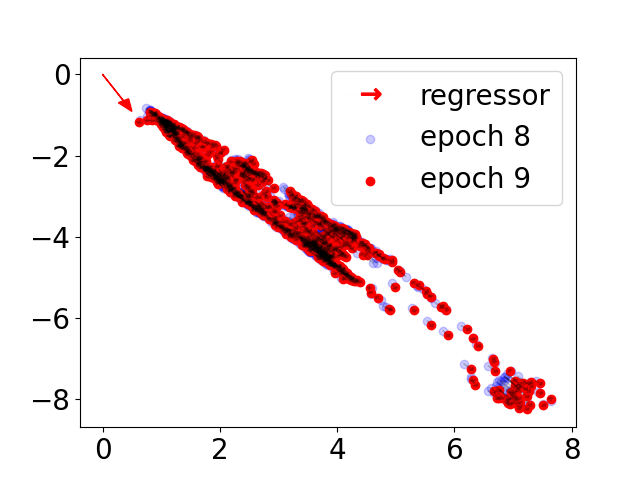}}
        \subfigure[epoch 9 to 10]{	\includegraphics[width=0.32\linewidth,height=0.33\linewidth]{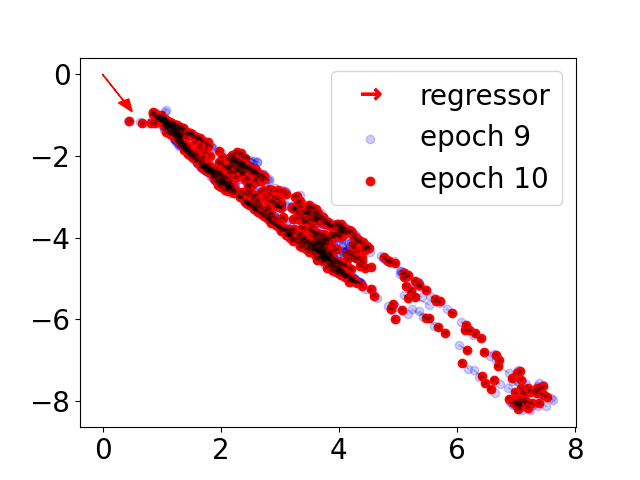}}
	\caption{Change of $\bz$ between adjoin epochs. }
 \label{Figure:zAtEachEpoch}
\end{figure}

\subsection{Updating of multiple $\theta$}
\label{appendix:multipleTheta}
The experiments are conducted on NYUD2-DIR, we change the last hidden layer's feature space to three dimensions for visualization, and the M in our MT strategy is set to 3.
The change of multiple $\bm{\theta}$s throughout the training is given in Figure \ref{figure:appendix:multipleTheta}. We further plot the change of $\{v_{\bm{\theta}}^{i}| i=500, k \in \mZ, 0 \leq k \leq 500 \}$ for three $\bm{\theta}$s. The visualizations are given in Figure \ref{figure:z}. The visualization shows the updating directions of $\bm{\theta}$s align each others, even for a neural network without training.

\begin{figure}[!t]
\centering
\includegraphics[width=0.9\linewidth]{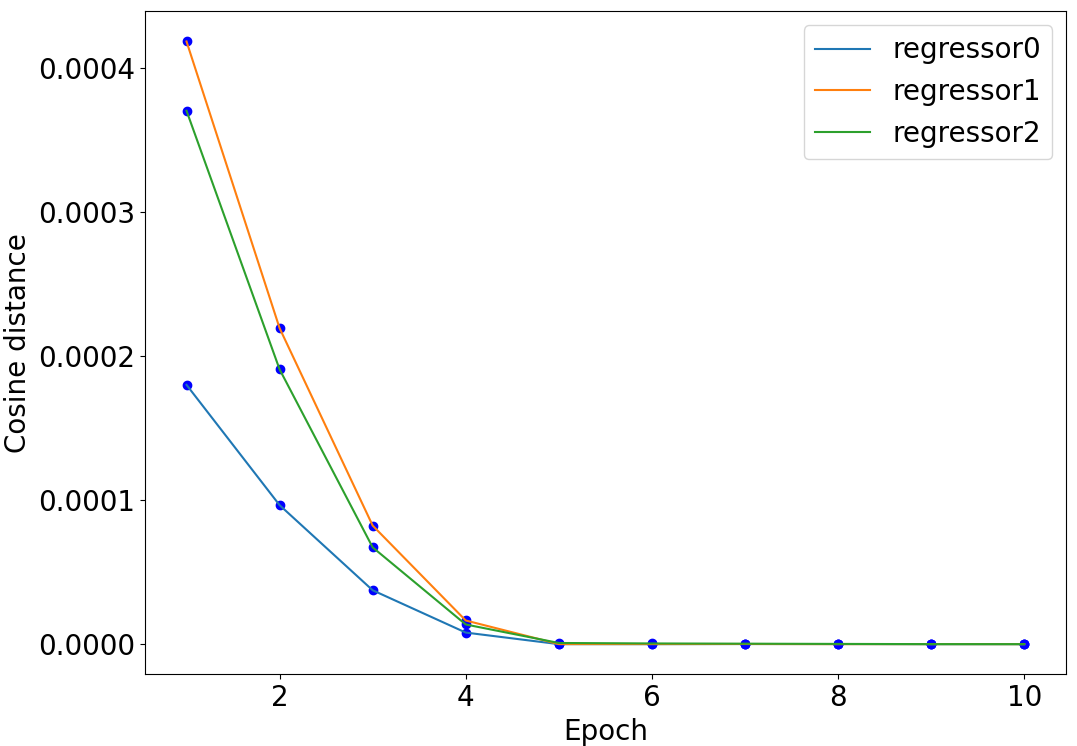}
\caption{Change of the multiple $\bm{\theta}$s.}
\label{figure:appendix:multipleTheta}
\end{figure}

\begin{figure}[!t]
\centering
\includegraphics[width=0.9\linewidth]{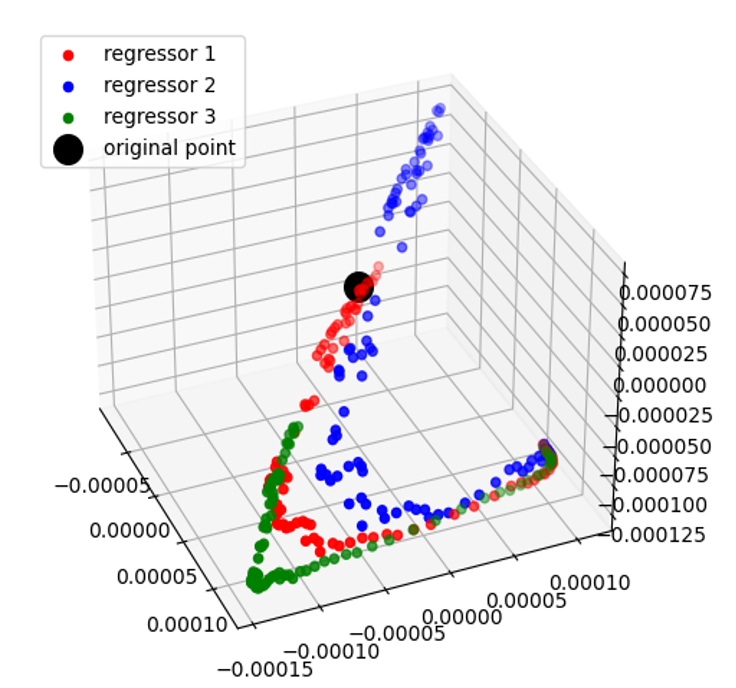}
\caption{Change of $\bm{\theta}$s within the iteration $[0, 500]$.}
\label{figure:z}
\end{figure}

\end{document}